\pdfoutput=1
\documentclass{article}
\usepackage{iclr2027_conference,times}

\usepackage{amsmath,amssymb,amsfonts,amsthm}
\usepackage{algorithm}
\usepackage{algorithmic}
\usepackage{graphicx}
\usepackage{booktabs}
\usepackage{subcaption}
\usepackage{array}
\usepackage{pifont}
\usepackage{hyperref}
\usepackage{url}
\usepackage{cleveref}
\usepackage{placeins}
\hypersetup{pdftitle={FlashSinkhorn 2: Block-Sparse Entropic Optimal Transport},
  pdfauthor={Felix X.-F. Ye, Yu Chin Fabian Lim, Naigang Wang, Davis Wertheimer}}

\newcommand{\R}{\mathbb{R}}
\newcommand{\bx}{\mathbf{x}}
\newcommand{\by}{\mathbf{y}}
\newcommand{\ba}{\mathbf{a}}
\newcommand{\bb}{\mathbf{b}}
\newcommand{\bC}{\mathbf{C}}
\newcommand{\bP}{\mathbf{P}}
\newcommand{\bX}{\mathbf{X}}
\newcommand{\bY}{\mathbf{Y}}
\newcommand{\mf}{\mathbf{f}}
\newcommand{\mg}{\mathbf{g}}
\newcommand{\mm}{\mathbf{m}}
\newcommand{\bs}{\mathbf{s}}
\newcommand{\ones}{\mathbf{1}}
\newcommand{\lo}{\mathrm{lo}}
\newcommand{\hi}{\mathrm{hi}}
\newcommand{\LSE}{\mathrm{LSE}}
\newcommand{\OT}{\mathrm{OT}}
\newcommand{\eps}{\varepsilon}
\newcommand{\cO}{\mathcal{O}}
\newcommand{\bmx}[1]{#1^{\vee}}
\newcommand{\cA}{\mathcal{A}}
\newcommand{\cJ}{\mathcal{J}}
\newcommand{\bdelta}{\boldsymbol{\delta}}
\newcommand{\cmark}{\ding{51}}
\newcommand{\xmark}{\ding{55}}

\newtheorem{theorem}{Theorem}
\newtheorem{proposition}{Proposition}
\crefname{proposition}{Proposition}{Propositions}
\Crefname{proposition}{Proposition}{Propositions}

\title{FlashSinkhorn~2: Block-Sparse\\ Entropic Optimal Transport}

\author{Felix X.-F. Ye$^{1}$ \quad Yu Chin Fabian Lim$^{2}$ \quad Naigang Wang$^{2}$ \quad Davis Wertheimer$^{2}$ \\
$^{1}$Department of Mathematics \& Statistics, University at Albany, Albany, NY, USA \\
$^{2}$IBM T.\ J.\ Watson Research Center, Yorktown Heights, NY, USA \\
Correspondence to: \texttt{xye2@albany.edu}
}

\iclrfinalcopy


\begin{document}

\maketitle
\lhead{Preprint}

\begin{abstract}

Streaming GPU solvers for entropic optimal transport (EOT), such as FlashSinkhorn, avoid storing the dense kernel but still evaluate all $n\times m$ point pairs in every Sinkhorn iteration. We present \textbf{FlashSinkhorn~2} (FS2), a solver for squared-Euclidean cost on low-dimensional point clouds that solves large discrete EOT problems to a prescribed marginal residual on a single GPU by coupling two stages. A coarse stage solves on cell centroids, lifts the potentials to every point and, when a sampled marginal check rejects the lift, continues on the centroids, replacing most point-level updates. A block-sparse fine stage then removes the centroid error that coarse updates cannot. Its Morton-ordered blocks support screening and fused tensor-core execution, and a threshold set by the block masses bounds each omitted tile's contribution to every row and column. On synthetic benchmarks, FS2 reaches the target residual on all 32 problems and GeomLoss multiscale on 10. On one A100, FS2 solves discrete EOT between two $1.34\times10^8$-particle measures from a cosmological $N$-body simulation, at an entropic blur equal to the mean interparticle distance, to an all-particle marginal residual below 0.01 in under 2.5 hours. To our knowledge, it is the largest discrete EOT problem solved to this accuracy within hours. For reproducibility, we release an open-source implementation at \url{https://github.com/ot-triton-lab/flash-sinkhorn}.

\end{abstract}

\section{Introduction}
\label{sec:intro}

Optimal transport (OT) compares probability measures in applications ranging from generative modeling to cell dynamics and cosmological reconstruction \citep{arjovsky2017wasserstein, schiebinger2019optimal, frisch2002reconstruction}. Entropic regularization \citep{cuturi2013sinkhorn} yields a differentiable problem, entropic OT (EOT), solved by Sinkhorn iterations. Applications in computational biology, medical imaging, and cosmology now routinely produce point clouds with $10^5$--$10^8$ points per measure \citep{bunne2024optimal, feydy2019fast, levy2025large}. Solvers that store the dense $n\times m$ kernel exceed GPU memory beyond about $10^5$ points. MOSCOT \citep{klein2025moscot} couples a few hundred thousand cells per time point and suggests low-rank plans \citep{scetbon2021lowrank} for millions. \citet{levy2025large} recover initial particle positions from a cosmological dataset with $1.34\times10^8$ target particles by semi-discrete transport, and note that a Sinkhorn iteration between unstructured point clouds costs quadratic time, ``prohibiting very large-scale applications.'' Solvers that evaluate entries on the fly, GeomLoss \citep{feydy2019interpolating} through KeOps \citep{charlier2021kernel} and FlashSinkhorn \citep{flashsinkhorn2026} through one fused tiled kernel per update, avoid storing the $n\times m$ matrix but still perform $\cO(nm)$ work per update.

At moderate regularization, much of this work is wasted, because the entropic plan concentrates on nearby pairs. Multiscale solvers first solve a coarser problem, which cuts the number of fine iterations, and then skip distant pairs whose contribution is bounded \citep{schmitzer2019stabilized} or estimated from the coarse potentials \citep{feydy2019fast}. GeomLoss multiscale combines both on the GPU but does not test the marginals it returns. Bringing a large problem to a prescribed residual with both ideas raises three practical issues. First, the solution of the coarser problem is only approximate, and checking whether it is accurate enough costs $\cO(nm)$. Second, screening all point pairs to find the useful ones costs $\cO(nm)$ as well. Third, the surviving pairs are distributed unevenly across rows, yet GPU tensor cores require fixed-size tiles.

We introduce \textbf{FlashSinkhorn~2} (FS2), which addresses these issues for squared-Euclidean costs in $d\le3$ by coupling a coarse stage with a block-sparse fine stage. FS2 first groups nearby points into cells and solves the much smaller problem between the cell centroids. It then lifts this solution to the original points and checks it on randomly drawn rows and columns instead of all $nm$ pairs. If the check fails, FS2 refines the centroid solution further and, if the check still fails, removes the error of replacing points by centroids with sparse updates on the points. These updates skip negligible pairs by screening fixed-size blocks of nearby points instead of single points, and each block pair they keep maps onto whole tensor-core tiles. Every update runs FlashSinkhorn's streaming kernel, so FS2 stores block-pair indices but no kernel entries. Our contributions are:

\begin{itemize}
  \item \textbf{Error guarantees.} Building on \citet{schmitzer2019stabilized}, we bound each omitted tile's contribution to every row and column, and hence the full marginal residual. For the lift, we isolate the centroid error that only fine updates remove.

  \item \textbf{GPU co-design.} Cells and blocks are contiguous ranges of one Morton order. The fine stage screens block pairs, cutting the cost to $\cO(nm/B^2)$, and runs each sparse half-step through FlashSinkhorn's fused kernel on the kept tiles.

  \item \textbf{Large-scale evaluation.} On one A100, FS2 solves discrete EOT between two $1.34\times10^8$-particle measures to a marginal residual below 0.01 in under 2.5 hours. To our knowledge, it is the largest discrete EOT problem solved to this accuracy within hours.

\end{itemize}

\section{Background and Related Work}
\label{sec:background}

\subsection{Entropic OT and FlashSinkhorn}
\label{sec:flashsinkhorn}

For point clouds $\bX=(x_i)_{i=1}^n$, $\bY=(y_j)_{j=1}^m$ in $\R^d$, strictly positive probability weights $\ba\in\Delta^n$, $\bb\in\Delta^m$, and squared Euclidean cost $C_{ij}=\|x_i-y_j\|^2$, EOT solves
\begin{equation}
\OT_\eps(\ba,\bb)=\min_{\bP\in\Pi(\ba,\bb)}
\langle\bP,\bC\rangle+\eps\,\mathrm{KL}(\bP\,\|\,\ba\otimes\bb),
\qquad \eps>0,
\label{eq:eot}
\end{equation}
where $\Pi(\ba,\bb)=\{\bP\ge0:\bP\ones=\ba,\bP^\top\ones=\bb\}$. The optimal plan is $P_{ij}=a_i b_j\exp((f_i+g_j-C_{ij})/\eps)$, with dual potentials $\mf\in\R^n$, $\mg\in\R^m$. With $\LSE_j(z_j)=\log\sum_j\exp(z_j)$ denoting the log-sum-exp over the index $j$, log-domain Sinkhorn uses the half-steps
\begin{equation}
\begin{aligned}
f_i\leftarrow-\eps\,\LSE_j\!\left[(g_j-C_{ij})/\eps+\log b_j\right],\qquad
g_j\leftarrow-\eps\,\LSE_i\!\left[(f_i-C_{ij})/\eps+\log a_i\right].
\end{aligned}
\label{eq:sinkhorn}
\end{equation}
Alternating updates apply the half-steps sequentially. Symmetric updates compute both from the same previous potentials, then average each updated potential with its previous value \citep{feydy2020geometric}. At converged balanced potentials, the source-coordinate gradient of EOT is $2a_i(x_i-T_\eps(x_i))$, where $T_\eps(x_i)$ is the barycentric projection of $x_i$ \citep{li2025robust}.

FlashSinkhorn (FS) rewrites each half-step as a row-wise LSE reduction of a biased dot-product score matrix \citep{flashsinkhorn2026}. For squared Euclidean cost, $C_{ij}=\|x_i\|^2+\|y_j\|^2-2x_i^\top y_j$, so with shifted potentials $\hat f_i=f_i-\|x_i\|^2$ and $\hat g_j=g_j-\|y_j\|^2$ the first half-step becomes $\hat f_i\leftarrow-\eps\,\LSE_j\!\left[ \big(2x_i^\top y_j+\hat g_j\big)/\eps+\log b_j\right]$, and the second exchanges the points and weights. A fused Triton kernel \citep{tillet2019triton} computes each half-step in one pass over tiles, fixed-size submatrices of the score matrix. For each tile it forms the dot products $x_i^\top y_j$ as a matrix product (GEMM), adds the bias, and folds the tile's scores into a running LSE \citep{milakov2018online}. The matrix products run on TF32 tensor cores, which offer higher throughput than FP32 at reduced mantissa precision. Scores and running statistics stay on chip and the kernel writes only the updated potentials to high-bandwidth memory (HBM), so FlashSinkhorn uses $\cO((n+m)d)$ working memory without materializing the $n\times m$ score matrix. KeOps \citep{charlier2021kernel} evaluates the same reduction pair by pair in scalar FP32 arithmetic and forms no matrix product, so it does not use the tensor cores. Each dense half-step still evaluates all $nm$ point pairs. The reduction streams over tiles, so a block mask can select the tiles the kernel visits, as in block-sparse variants of FlashAttention \citep{dao2022flashattention,dong2024flexattention}.

\subsection{Multiscale Sinkhorn and Related Approaches}
\label{sec:family}

The remaining question is which tiles to skip. Prior multiscale Sinkhorn methods answer it through spatial coarsening and potential-based truncation. In the plan, the factor $e^{-C_{ij}/\eps}$ falls to $1/e$ at $\|x_i-y_j\|=\sqrt\eps$, so $\sqrt\eps$ is the spatial scale of the regularization, called the \textit{blur}. In a locally uniform cloud, a ball whose radius is proportional to the blur holds few of the $m$ target points when the blur spans a few nearest-neighbour spacings, so most point pairs carry negligible mass and screening pays off. The number of points in such a ball grows exponentially with the dimension $d$, as the $d$-th power of the blur over the spacing, and the methods below, including ours, therefore target $d\le3$.

\citet{schmitzer2019stabilized} combines coarsening and potential-based truncation with $\eps$-annealing. It truncates the kernel at the current potentials, keeping only entries above a threshold, and bounds the resulting error in the primal--dual gap (Eq.~(3.7) and Propositions~10--11). To find these entries without testing all of them, it descends a hierarchy of cells and discards each cell pair that a bound shows to be negligible. This search combines potential maxima and cost lower bounds with entrywise refinement (Definition~13 and Algorithm~4). The reported experiments use a single CPU core (Section~5.1). GeomLoss multiscale \citep{feydy2019fast,feydy2020geometric} anneals $\eps$ on weighted cell centroids and lifts the coarse potentials to the points without checking them. Before the fine updates, it builds a cell-pair mask from the last coarse potentials, keeping a pair of cells when their centroid cost is below the sum of their potentials plus a default margin of $5\eps$. KeOps \citep{charlier2021kernel} then evaluates the fine updates on the kept pairs. Because the centroid cost does not bound the point-pair costs from below, this mask approximates Schmitzer's truncation but carries no bound on the mass it drops. Domain-decomposition methods solve local entropic subproblems in parallel, on CPUs \citep{bonafini2021domain} or, for regular grids, on GPUs \citep{medina2024domain}.

\subsection{Additional Related Work}
\label{sec:related}

Other approaches replace the Sinkhorn half-steps with gradient-based or Newton-type optimization to reduce iteration counts. \citet{kemertas2023mirror} combine mirror descent with conjugate gradients, and \citet{kemertas2025truncated} achieve locally quadratic convergence with a truncated Newton method. Like Sinkhorn-Newton-Sparse \citep{tang2024accelerating} and cuRegOT \citep{qiu2026curegot}, these solvers retain $\cO(nm)$ work per iteration and become impractical at the problem sizes we target. Over-relaxed Sinkhorn iterations \citep{thibault2021overrelaxed,lehmann2022note} reduce the iteration count but also retain $\cO(nm)$ work per iteration. Nystr\"{o}m \citep{altschuler2019massively} and low-rank \citep{scetbon2021lowrank} methods approximate the kernel or plan globally rather than exploiting spatial locality. Hierarchical Refinement \citep{halmos2025hierarchical} refines low-rank couplings recursively into a bijective Monge map in log-linear time, a different output from an EOT plan at a prescribed marginal residual.

Other multiscale methods address unregularized OT \citep{schmitzer2016sparse,gerber2017multiscale}, and octree extensions apply the GeomLoss centroid test at successive resolutions \citep{schindler2025shape}. \citet{chowdhury2021quantized} scale a quantized Gromov--Wasserstein variant to more than a million points. Sparse local corrections \citep{gasteiger2021scalable} extend the Nystr\"{o}m approximation of \citet{altschuler2019massively}. Convolutional heat kernels on grids \citep{solomon2015convolutional} and nonequispaced fast Fourier transforms accelerate kernel application. \citet{lakshmanan2022nfft} report matrix-free Sinkhorn computations with $10^7$ points in low dimension. Screenkhorn fixes selected dual variables before solving the reduced problem \citep{alaya2019screenkhorn}. Native Sparse Attention \citep{yuan2025nsa} uses compressed representations to score, rank and select key/value blocks. Each selected block contains consecutive tokens, allowing contiguous loads and tiled matrix computations even when selected blocks are separated in the sequence. FS2 uses spatial blocks of points and selects tiles through the potential-dependent, mass-based rule in \Cref{sec:rule}.

\section{FlashSinkhorn 2: Algorithm and Analysis}
\label{sec:method}

FS2 rests on two observations. First, coarse updates are cheap but cannot remove the error of replacing points by centroids. FS2 therefore stays on the centroids until coarse updates stall. It switches to fine updates (Sinkhorn updates on the points themselves) only if a sampled check of the lifted potentials fails (\Cref{sec:solve}). Second, most block pairs carry negligible mass. Fine updates skip them using a threshold set by the block masses, with a bounded error (\Cref{sec:rule}). \Cref{fig:partition} illustrates the dataflow and \Cref{alg:fs2_main} summarizes the solver. Appendix~\ref{app:proofs} gives the proofs and Appendix~\ref{app:implementation} the implementation details.

\begin{figure}[t]
\centering
\includegraphics[width=\linewidth]{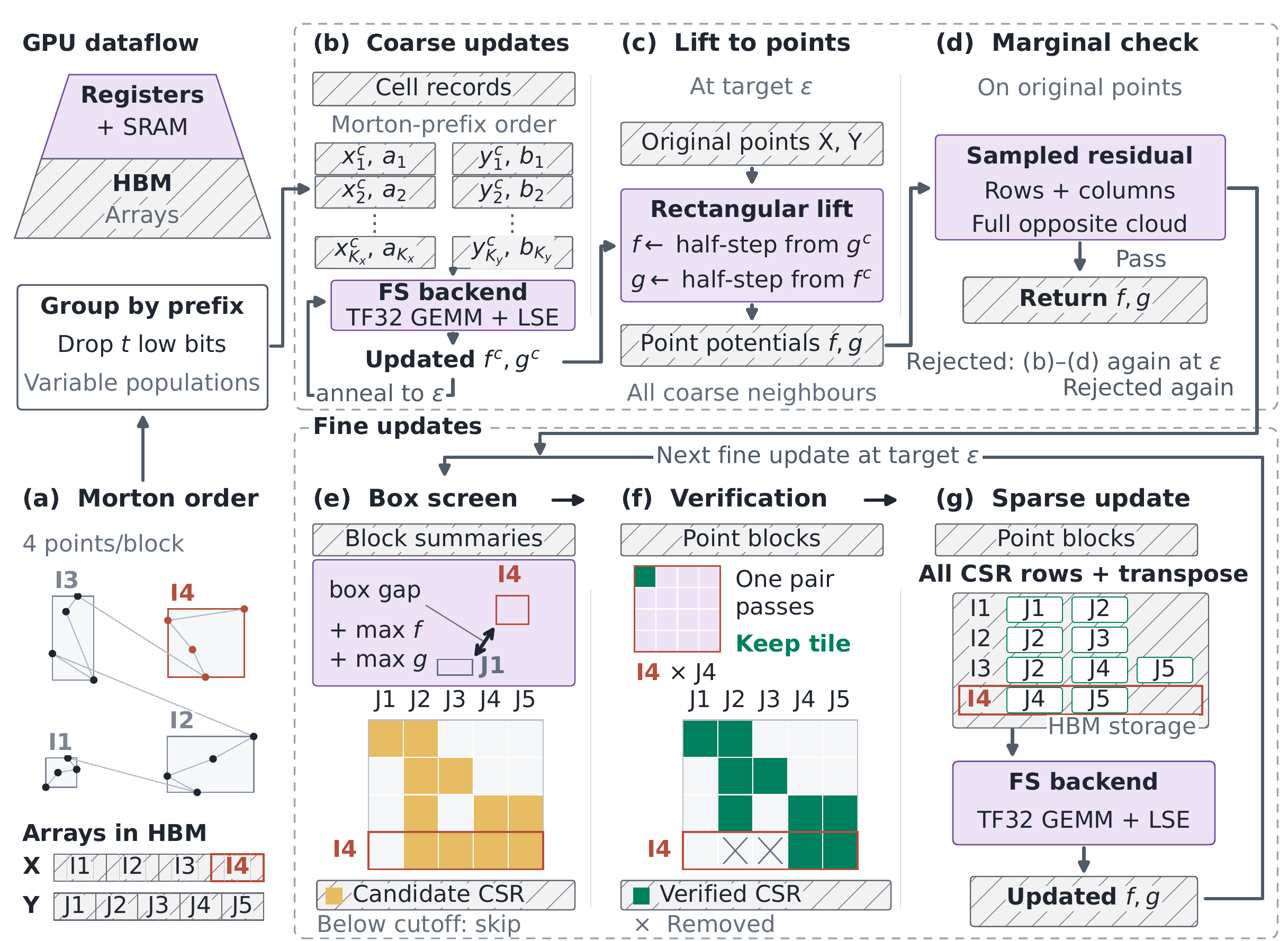}
\caption{\textbf{FlashSinkhorn~2 overview} (\Cref{alg:fs2_main}), drawn with $B{=}4$. Panel (a) is the Morton order both stages share. Panels (b)--(d) are the coarse stage (\Cref{sec:solve}) and panels (e)--(g) the fine stage (\Cref{sec:rule}). Red outlines trace source block~I4. Yellow and green mark candidate and admitted tiles. Purple denotes on-chip work, and hatched grey denotes HBM storage.}
\label{fig:partition}
\end{figure}

\begin{algorithm}[t]
\caption{FlashSinkhorn~2 forward solver.  Appendix~\ref{app:stopping-rules} gives the full stopping rules.}
\label{alg:fs2_main}
\begin{algorithmic}[1]
\REQUIRE Source $\bX$, target $\bY$, probability weights $\ba,\bb$, regularization $\eps$, tolerance $\tau$
\ENSURE pointwise potentials $(f,g)$ and the sampled-check outcome
\STATE Morton-sort, choose $B$ and $t$, form blocks, boxes, and cells $(x_I^c,a_I),(y_J^c,b_J)$
       \hfill\textit{// (a)}
\STATE anneal on centroids with FS streaming kernels to reach $\eps$
       \hfill\textit{// (b)}
\STATE lift: $f_i$, $g_j$ by one half-step of \eqref{eq:sinkhorn} against the coarse records
       \hfill\textit{// (c)}
\STATE sample $i\sim a_i$, $j\sim b_j$, then
       \textbf{if} both means plus three standard errors $<\tau$, \textbf{return}
       \hfill\textit{// (d)}
\STATE continue coarse updates at $\eps$ until they stall, lift, then
       \textbf{if} the check passes, \textbf{return}
\FOR{each fine update within budget}
    \STATE screen: keep $(I,J)$ with
           $d_{\min}^2(I,J)<\bmx{f}_I+\bmx{g}_J-\eps\,\rho_{IJ}$
           \hfill\textit{// (e)}
    \STATE verify: $\cA\leftarrow\{(I,J):m_{IJ}>\rho_{IJ}\}$ in CSR form
           \hfill\textit{// (f)}
    \STATE update: both sparse half-steps on $\cA$ from the same $(f,g)$, then average
           \hfill\textit{// (g)}
    \STATE \textbf{if} a scheduled sampled check passes, \textbf{return}
\ENDFOR
\RETURN potentials and check outcome
\end{algorithmic}
\end{algorithm}

Both stages need spatially compact groups of points stored contiguously for coalesced GPU access. We sort the points by Morton key (\Cref{fig:partition}a), which interleaves the bits of the coordinates, so points that share a key prefix are contiguous and lie in one axis-aligned box. Two partitions of the sorted order serve the two stages: cells, whose points share all but the lowest $t$ key bits, for the coarse stage of \Cref{sec:solve}, and blocks of $B$ consecutive points for the fine stage of \Cref{sec:rule}. We replace each cell by its total mass and mass-weighted centroid, giving $K_x$ source records $(x_I^c,a_I)$ and $K_y$ target records $(y_J^c,b_J)$. A finer partition reduces the distance from points to their centroids, but increases the $K_xK_y$ pairs evaluated by each coarse update. The automatic policy chooses $B\in\{64,128,256\}$ by comparing the estimated block-summary memory with the L2 budget, and takes the smallest $t$ whose coarse state fits the same budget, with at most twice as many cells as blocks.

\subsection{Coarse Sinkhorn Updates}
\label{sec:solve}

The coarse stage solves the EOT problem between the cell records. We anneal with FlashSinkhorn's streaming TF32 kernels \citep{flashsinkhorn2026} (\Cref{fig:partition}b), running coarse updates along a decreasing $\eps$-schedule, each from the potentials of the previous one. When the schedule first reaches the requested $\eps$, we \emph{lift} the coarse potentials to the original points (\Cref{fig:partition}c). The lifted $f_i$ is the first half-step of \eqref{eq:sinkhorn} with the coarse target records $(y_J^c,b_J,g_J^c)$ in place of $(y_j,b_j,g_j)$, and $g_j$ is the second with $(x_I^c,a_I,f_I^c)$ in place of $(x_i,a_i,f_i)$. The two lifts evaluate $nK_y+mK_x$ pairs.

The solver returns once a check finds both marginal residuals of the current plan, $\|\bP\ones-\ba\|_1$ and $\|\bP^\top\ones-\bb\|_1$, below a tolerance $\tau$. Computing them exactly costs $\cO(nm)$, so we check by sampling (\Cref{fig:partition}d). We draw rows $i$ with probability $a_i$ and average $|r_i/a_i-1|$, where the row sum $r_i=(\bP\ones)_i$ costs $\cO(md)$. The average is an unbiased estimate of the row residual, and columns are sampled in the same way. The check passes when both estimates plus three standard errors are below $\tau$. Every residual we report is evaluated at the returned potentials, independently of the check. A lifted plan can miss $\tau$ for two reasons, which the following bound separates.

\begin{proposition}
\label{prop:coarse-lift-residual}
Let $\bP^c$ be the entropic plan induced by any finite coarse potentials, $\lambda_c=\max_J|\log(\sum_I P^c_{IJ}/b_J)|$ its largest log column-mass error, $R_Y=\max_J\max_{j\in J}\|y_j-y_J^c\|$ the largest target-cell radius, and $\theta=2\operatorname{diam}(\bX)R_Y/\eps$. The plan $\bP$ induced by the two lifts satisfies
\begin{equation}
\|\bP\ones-\ba\|_1
\le (e^{\lambda_c}-1)+e^{\lambda_c}(e^\theta-1-\theta).
\label{eq:coarse-lift-bound}
\end{equation}
With the clouds exchanged, the same bound holds for the column residual $\|\bP^\top\ones-\bb\|_1$.
\end{proposition}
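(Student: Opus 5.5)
The plan is a direct computation. We write each lifted row sum $r_i/a_i$ as a weighted average, over target cells $J$, of per-cell factors $\beta_{iJ}$. We then show that every $\beta_{iJ}$ lies in $[\,e^{-\lambda_c}/(1+\varphi),\ e^{\lambda_c}(1+\varphi)\,]$, where $\varphi=e^\theta-1-\theta$. Since $\ba$ is a probability vector, $\|\bP\ones-\ba\|_1=\sum_i a_i|r_i/a_i-1|$. It therefore suffices to show $|\beta_{iJ}-1|\le(e^{\lambda_c}-1)+e^{\lambda_c}\varphi$ for all $i,J$.

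\emph{Setup.} Write $\delta_j=y_j-y_J^c$ for $j\in J$. The centroid is mass-weighted, so $\sum_{j\in J}b_j\delta_j=0$, where $b_J=\sum_{j\in J}b_j$. The expansion $\|x-y_j\|^2=\|x-y_J^c\|^2-2\langle x-y_J^c,\delta_j\rangle+\|\delta_j\|^2$ holds for any $x$. The term $\|\delta_j\|^2$ enters both $C_{ij}$ and the lifted $g_j$, and it cancels in $P_{ij}$. Let $\pi_J(I)=P^c_{IJ}/\sum_{I'}P^c_{I'J}$ be the conditional law of the coarse plan on column $J$. Write the coarse column mass as $b_Je^{\mu_J}$, so that $|\mu_J|\le\lambda_c$. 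Substituting the lifted $g_j$ gives
\[
e^{g_j/\eps}=e^{g^c_J/\eps}\,e^{-\mu_J}\,e^{\|\delta_j\|^2/\eps}\Big/\mathbb{E}_{I\sim\pi_J}\big[e^{2\langle x_I^c-y_J^c,\delta_j\rangle/\eps}\big].
\]
Substituting the lifted $f_i$ as well, we get $r_i/a_i=\sum_J w_{iJ}\beta_{iJ}$. The weights are $w_{iJ}\propto b_Je^{(g^c_J-\|x_i-y^c_J\|^2)/\eps}$ and sum to one, by the definition of the $f$-lift. The per-cell factor is
\[
\beta_{iJ}=e^{-\mu_J}\sum_{j\in J}\frac{b_j}{b_J}\,\frac{e^{2\langle x_i-y_J^c,\delta_j\rangle/\eps}}{\mathbb{E}_{\pi_J}e^{2\langle x_I^c-y_J^c,\delta_j\rangle/\eps}}.
\]

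\emph{Key estimates.} Let $\bar x_J=\mathbb{E}_{\pi_J}x_I^c$. Each cell centroid lies in the convex hull of $\bX$, so $\bar x_J$ does too. Multiply numerator and denominator of each term by $e^{-2\langle\bar x_J-y_J^c,\delta_j\rangle/\eps}$. The numerator becomes $e^{u_j}$ with $u_j=2\langle x_i-\bar x_J,\delta_j\rangle/\eps$. The denominator becomes $\mathbb{E}_{\pi_J}e^{v_{Ij}}$ with $v_{Ij}=2\langle x_I^c-\bar x_J,\delta_j\rangle/\eps$.

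Both exponents satisfy $|u_j|,|v_{Ij}|\le 2\operatorname{diam}(\bX)R_Y/\eps=\theta$. Moreover, $u_j$ has $b$-weighted mean zero over $j\in J$, and $v_{Ij}$ has $\pi_J$-mean zero. For any zero-mean variable $z$ with $|z|\le\theta$, we have $1\le\mathbb{E}e^z=1+\mathbb{E}(e^z-1-z)\le1+\varphi$. The lower bound is Jensen. The upper bound holds because $e^z-1-z$ is increasing in $|z|$ on $[-\theta,\theta]$, apart from being larger at $+\theta$ than at $-\theta$, so its maximum there is $\varphi$.

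\emph{Upper bound on $\beta_{iJ}$.} The denominator is at least $1$ and the numerator averages to at most $1+\varphi$. Hence $\beta_{iJ}\le e^{\lambda_c}(1+\varphi)$, which gives $\beta_{iJ}-1\le(e^{\lambda_c}-1)+e^{\lambda_c}\varphi$.

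\emph{Lower bound on $\beta_{iJ}$.} The denominator is at most $1+\varphi$ and the numerator averages to at least $1$. Hence $\beta_{iJ}\ge e^{-\lambda_c}/(1+\varphi)$. With $s=(1+\varphi)e^{\lambda_c}\ge1$, this gives
\[
1-\beta_{iJ}\le(s-1)/s\le s-1=(e^{\lambda_c}-1)+e^{\lambda_c}\varphi.
\]

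Averaging over $J$ with weights $w_{iJ}$ and then over $i$ with weights $a_i$ yields \eqref{eq:coarse-lift-bound}. The column bound follows by exchanging the roles of the two clouds.

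The main obstacle is the bookkeeping in the setup step: the $\|\delta_j\|^2$ terms must cancel exactly, and the $f$-lift must normalize $w_{iJ}$ exactly. Once that is in place, the key step is recentering at $\bar x_J$ rather than at $y_J^c$. Recentering is what makes both exponents mean-zero, and this yields the second-order term $e^\theta-1-\theta$ instead of a first-order $e^\theta-1$.
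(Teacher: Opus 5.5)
Your proof is correct. It starts from the same identity as the paper, \eqref{eq:lift-identity}: your $\beta_{iJ}$ is $e^{-\mu_J}\sum_{j\in J}(b_j/b_J)\,\phi_{iJ}(\delta_j)$, where $\phi_{iJ}(z)=[\sum_I\pi_J(I)e^{2(x^c_I-x_i)^\top z/\eps}]^{-1}$. The two arguments differ in the core estimate.

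The paper writes $\phi_{iJ}(z)=F(1)$, with $F(s)=(\sum_I\pi_J(I)e^{s\ell_I})^{-1}$ and $\ell_I=2(x^c_I-x_i)^\top z/\eps$. It expands $F$ to second order with integral remainder and bounds $|F''(s)|\le e^{s\theta}\theta^2$ through the mean and variance of the tilted law. This gives $|F(1)-1-F'(0)|\le e^\theta-1-\theta$ for each $z$. The linear term $F'(0)$ averages to zero over the cell, and the paper absorbs the factor $e^{-\mu_J}$ additively by the triangle inequality.

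Your $u_j$ is exactly $F'(0)$ at $z=\delta_j$. So instead of expanding, you factor $\phi_{iJ}(\delta_j)=e^{u_j}/\mathbb{E}_{\pi_J}e^{v_{Ij}}$ exactly, with both exponents centred. Jensen's inequality and the pointwise bound $e^z-1-z\le\varphi$ on $[-\theta,\theta]$ then give the multiplicative sandwich $(1+\varphi)^{-1}\le\sum_{j\in J}(b_j/b_J)\phi_{iJ}(\delta_j)\le1+\varphi$. You combine this multiplicatively with $e^{-\mu_J}\in[e^{-\lambda_c},e^{\lambda_c}]$.

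Your route needs no derivatives of the log-partition function. It also shows at once that $\varphi$ may be replaced by the smaller $\cosh\theta-1$, since a centred variable with values in $[-\theta,\theta]$ has $\mathbb{E}e^Z\le\cosh\theta$ by convexity. The paper's route keeps the estimate in additive Taylor form. That makes explicit that the centroid error is a second-order remainder around the one linear term that the exact centroid cancels.
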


The proof is in Appendix~\ref{app:coarse-lift-bound}. The first term comes from inexact coarse marginals and vanishes at coarse convergence. The second bounds the error of replacing points by centroids. It grows like $\theta^2/2$ for small $\theta$, and at coarse convergence it reduces to $e^\theta-1-\theta$, which no coarse update changes. When the check rejects a lift, the solver therefore shrinks the first term with further coarse updates at the target $\eps$, which cost only $K_xK_y$ centroid pairs each, until its potentials stall or 2048 extra updates have run, and then lifts and checks again. If this check also fails, the solver turns to fine updates on the points, repeats the check on a schedule, and reports that the tolerance was not confirmed if the fine-update budget runs out first. On all our problems the bound of \Cref{prop:coarse-lift-residual} exceeds $\tau$, so the sampled check, not the bound, decides whether to accept a lift.

Outside the forward solve, the coarse records also yield a surrogate gradient for optimization.\label{sec:coarse-direction} Holding the coarse target records $(y_J^c,b_J,g_J^c)$ fixed makes each lifted $f_i$ a function of $x_i$ alone, and the surrogate $\widetilde{\mathcal L}=\sum_i a_if_i$ has gradient $\widetilde G_i=2a_i(x_i-\widetilde T_i)$, where $\widetilde T_i$ is the barycentric projection of $x_i$ against these records. The gradient costs the lift's $nK_y$ pairs and needs no fine update. Appendix~\ref{app:coarse-gradient} gives a condition on direction and step size sufficient to decrease the original objective.

\subsection{Sparse Sinkhorn Updates}
\label{sec:rule}

A dense fine update evaluates all $nm$ point pairs, so FS2 restricts each fine update to the block pairs in a mask. We build the mask over the $N_B=\lceil n/B\rceil$ source and $M_B=\lceil m/B\rceil$ target blocks of at most $B$ points by potential-dependent truncation of the current entropic plan $P_{ij}=a_i b_j e^{(f_i+g_j-C_{ij})/\eps}$. From here on, $I$ and $J$ index blocks rather than cells. Both half-steps share the mask, so its threshold must account for both block masses. The following bound makes this precise.
\begin{theorem}
\label{thm:rule}
Fix the potentials and $\delta_{\mathrm{rel}}>0$. Write $a_I=\sum_{i\in I}a_i$, $b_J=\sum_{j\in J}b_j$, and define
\begin{equation}
m_{IJ}=\max_{i\in I,j\in J}\frac{f_i+g_j-C_{ij}}{\eps},
\qquad
\rho_{IJ}=\log\frac{\delta_{\mathrm{rel}}}{\max(a_I,b_J)}.
\label{eq:exacttest}
\end{equation}
If $m_{IJ}\le\rho_{IJ}$, then $\sum_{j\in J}P_{ij}\le\delta_{\mathrm{rel}}a_i$ for every $i\in I$, and $\sum_{i\in I}P_{ij}\le\delta_{\mathrm{rel}}b_j$ for every $j\in J$.
\end{theorem}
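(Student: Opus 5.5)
The plan is a direct entrywise bound followed by summation over one block index, using only that block masses dominate individual weights. Fix $i\in I$ and $j\in J$. By definition of $m_{IJ}$ as a maximum over the block pair, $(f_i+g_j-C_{ij})/\eps\le m_{IJ}\le\rho_{IJ}$. Exponentiating and using $P_{ij}=a_ib_je^{(f_i+g_j-C_{ij})/\eps}$ gives the entrywise bound
\begin{equation*}
P_{ij}\le a_ib_j\,e^{\rho_{IJ}}=\frac{\delta_{\mathrm{rel}}\,a_ib_j}{\max(a_I,b_J)} .
\end{equation*}

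For the row statement I fix $i\in I$ and sum over $j\in J$. Since $\sum_{j\in J}b_j=b_J$, this yields $\sum_{j\in J}P_{ij}\le\delta_{\mathrm{rel}}a_i\,b_J/\max(a_I,b_J)\le\delta_{\mathrm{rel}}a_i$, because $b_J\le\max(a_I,b_J)$. The column statement is symmetric: for fixed $j\in J$, summing over $i\in I$ produces the factor $a_I/\max(a_I,b_J)\le1$, hence $\sum_{i\in I}P_{ij}\le\delta_{\mathrm{rel}}b_j$.

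No step is genuinely hard. The one point to state explicitly is why the threshold uses $\max(a_I,b_J)$ rather than $a_I$ or $b_J$ separately. The row sum over $J$ needs the factor $b_J$ absorbed, and the column sum over $I$ needs $a_I$ absorbed, so a single threshold shared by both half-steps must divide by the larger of the two. I will also note that the weights are strictly positive, so $a_I,b_J>0$, $\rho_{IJ}$ is finite, and every step is valid for arbitrary finite potentials, with no convergence assumption.
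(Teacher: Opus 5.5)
Your proof is correct and follows the paper's own argument: bound every entry of the tile by $a_ib_je^{m_{IJ}}$, sum over the opposite block to pick up the factor $b_J$ or $a_I$, and absorb that mass into $\max(a_I,b_J)$. The only difference is that you substitute $\rho_{IJ}$ before summing rather than after, which changes nothing, and your remark on why the shared threshold divides by the maximum matches the paper's comment after its proof.
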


The proof is in Appendix~\ref{app:proof}: bound each plan entry by $e^{m_{IJ}}$, then sum the weights in the opposite block. Testing every $m_{IJ}$ directly would evaluate all $nm$ point pairs, so we screen in two stages (\Cref{fig:partition}e--f). First, a \emph{box screen} rejects block pairs whose axis-aligned bounding boxes are far apart, using only block-level potential maxima $\bmx{f}_I=\max_{i\in I}f_i$, $\bmx{g}_J=\max_{j\in J}g_j$ and the minimum squared distance $d_{\min}^2(I,J)$:
\begin{equation}
\cA:=\{(I,J):m_{IJ}>\rho_{IJ}\}
\subseteq\{(I,J):d_{\min}^2(I,J)<
\bmx{f}_I+\bmx{g}_J-\eps\rho_{IJ}\}.
\label{eq:boxtest}
\end{equation}
This reduces the scan from $n md$ to $N_BM_Bd$. Second, a \emph{verification} pass evaluates the exact $m_{IJ}$ over all $B^2$ point pairs of each \emph{candidate} (a block pair that passes the box screen) and rejects those with $m_{IJ}\le\rho_{IJ}$. We store the \emph{admitted} tiles $\cA$ in CSR form, which lists, for each source block $I$, the target blocks $J$ with $(I,J)\in\cA$. The $f$ half-step reads this index and the $g$ half-step reads its transpose. If a block has no admitted tile, its sparse half-step would have no terms, so the solver computes it densely, against every point of the other cloud. The other blocks stay sparse.

Both the truncation principle, which rejects negligible pairs with the current potentials, and the block relaxation behind \Cref{eq:boxtest} come from \citet{schmitzer2019stabilized}, but FS2 differs from his solver in three ways. First, Schmitzer stores the truncated kernel, one entry per retained point pair, and iterates by sparse matrix--vector products. FS2 stores one index per admitted tile instead of its $B^2$ entries, because each sparse half-step runs FlashSinkhorn's streaming kernel (\Cref{sec:flashsinkhorn}) over these tiles and recomputes their scores on chip. Second, Schmitzer refines a cell hierarchy recursively down to point pairs, whereas we screen all block pairs in parallel at a single level and keep whole $B\times B$ tiles, since the kernel evaluates every pair of an admitted tile. Each kept tile spans two contiguous ranges of the Morton order, so the kernel loads both ranges with coalesced reads and forms their scores on tensor cores. Third, Schmitzer's threshold is uniform, whereas ours is set by the block masses. Because \Cref{thm:rule} certifies a mask only at the potentials used to build it, we rebuild the mask at every fine update. The mask also determines the error in each sparse update and in the gradient.

\begin{proposition}
\label{prop:mask-error}
Let $\nu_i$ be the fraction of row~$i$'s plan mass omitted by the mask, with positive retained mass. Then \textup{(i)}~the sparse and full row updates differ by $|f_i^{+,\,\cA}-f_i^+|=\eps\log\frac{1}{1-\nu_i}$, and \textup{(ii)}~the barycentric projection over the admitted columns satisfies $\|T_\eps^{\cA}(x_i)-T_\eps(x_i)\|\le\nu_i\,\operatorname{diam}(\bY)$.
\end{proposition}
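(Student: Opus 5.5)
The plan is to write both the full and the sparse row updates in terms of the row-$i$ plan mass, and then read off (i) directly and (ii) by a convex-combination argument. Fix the current potentials $(\mf,\mg)$ and a row $i$. Set $w_{ij}=b_j\exp((g_j-C_{ij})/\eps)$, so the current plan row is $P_{ij}=a_i e^{f_i/\eps}w_{ij}$. The full half-step \eqref{eq:sinkhorn} then reads $f_i^+=-\eps\log S$ with $S=\sum_j w_{ij}$. The sparse half-step restricts the sum to the admitted columns $\cJ_i=\{j: j\in J,\ (I,J)\in\cA\}$, where $I\ni i$, giving $f_i^{+,\cA}=-\eps\log S_\cA$ with $S_\cA=\sum_{j\in\cJ_i}w_{ij}$. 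The factor $a_ie^{f_i/\eps}$ is common to every entry of row $i$. Hence the omitted fraction of row $i$'s plan mass is
\begin{equation*}
\nu_i=\frac{S-S_\cA}{S},\qquad\text{so}\qquad S_\cA=(1-\nu_i)S .
\end{equation*}
The hypothesis of positive retained mass gives $S_\cA>0$, i.e.\ $\nu_i<1$, so every logarithm below is finite.

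For (i), I would substitute $S_\cA=(1-\nu_i)S$ into the sparse update:
\begin{equation*}
f_i^{+,\cA}-f_i^+=-\eps\log\frac{S_\cA}{S}=\eps\log\frac{1}{1-\nu_i}\ge0 .
\end{equation*}
This gives the stated equality, and the sign shows that truncation can only raise the updated potential. The case in which row $i$'s block has no admitted tile and is computed densely has $\nu_i=0$, so it is consistent with the formula.

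For (ii), I would write the barycentric projection at the current potentials as a normalized average. With $\pi_j=w_{ij}/S$ we have $T_\eps(x_i)=\sum_j\pi_j y_j$, and $T_\eps^\cA(x_i)=\sum_{j\in\cJ_i}\pi_j y_j/(1-\nu_i)$. Let $U=\sum_{j\in\cJ_i}\pi_jy_j/(1-\nu_i)$ be the retained average and $V=\sum_{j\notin\cJ_i}\pi_jy_j/\nu_i$ the omitted average (if $\nu_i=0$ there is nothing to prove). Then $T_\eps=(1-\nu_i)U+\nu_iV$, and therefore
\begin{equation*}
T_\eps^\cA-T_\eps=U-\bigl((1-\nu_i)U+\nu_iV\bigr)=\nu_i(U-V).
\end{equation*}
Both $U$ and $V$ are convex combinations of points of $\bY$, so they lie in its convex hull. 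The diameter of the convex hull equals $\operatorname{diam}(\bY)$, hence $\|U-V\|\le\operatorname{diam}(\bY)$, which gives the claim.

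The step I expect to need the most care is fixing conventions rather than any estimate. I must check that ``fraction of row $i$'s plan mass'' refers to the plan at the same potentials used in the update, so that the common factor $a_ie^{f_i/\eps}$ cancels. I must also check that $T_\eps^\cA$ normalizes by the retained mass. Once these are fixed, the argument above is complete. For context, \Cref{thm:rule} then supplies the bound $\nu_i\le\delta_{\mathrm{rel}}\cdot(\#\text{omitted tiles in row block }I)$, but that bound is not needed here.
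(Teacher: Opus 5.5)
Your proof is correct and follows the paper's. In (i) the paper also cancels the common factor $a_ie^{f_i/\eps}$ to get $S_i^{\cA}=(1-\nu_i)S_i$, and in (ii) your identity $T_\eps^{\cA}-T_\eps=\nu_i(U-V)$ is the aggregated form of the paper's identity $T_\eps-T_\eps^{\cA}=\sum_{\text{omitted }j}P_{ij}\bigl(y_j-T_\eps^{\cA}\bigr)/\sum_jP_{ij}$; the paper applies the triangle inequality term by term before using the diameter, whereas you bound $\|U-V\|$ directly, which gives the same bound $\nu_i\operatorname{diam}(\bY)$.
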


The proof is in Appendix~\ref{app:sparse-update-error}. Both errors vanish with $\nu_i$, which \Cref{thm:rule} bounds. Each omitted tile contributes at most $\delta_{\mathrm{rel}}a_i$ to row $i$, and a source block omits at most $M_B$ tiles, so $\nu_i\le M_B\delta_{\mathrm{rel}}\,a_i/r_i$, where $a_i/r_i$ tends to~$1$ as the marginals converge. This bound guides the choice of $\delta_{\mathrm{rel}}$ for a tolerance $\tau$. The automatic policy takes $\delta_{\mathrm{rel}}=\tau/(4\max(N_B,M_B))$, so $\nu_i\le(\tau/4)\,a_i/r_i$ in every row and the mass the mask omits, summed over all rows or over all columns, is at most $\tau/4$. This also certifies the full residual. Let $\bP^{\cA}$ keep only the admitted tiles of $\bP$, and let $\Omega=\sum_{(I,J)\notin\cA}a_Ib_Je^{\bar m_{IJ}}$, where $\bar m_{IJ}$ is $m_{IJ}$ for a rejected candidate and $(\bmx{f}_I+\bmx{g}_J-d_{\min}^2(I,J))/\eps$ for a pair the box screen rejects. Then $\Omega$, at most $\tau/4$, bounds the omitted mass. At the potentials used to build the mask, $\|\bP\ones-\ba\|_1\le\|\bP^{\cA}\ones-\ba\|_1+\Omega$, and likewise for the columns. The sparse half-steps give the marginals of $\bP^{\cA}$ at no extra cost, since $(\bP^{\cA}\ones)_i/a_i=e^{(f_i-f_i^{+,\cA})/\eps}$, and $\Omega$ costs one reduction over values the two passes already form.

When an application needs an accurate gradient rather than the surrogate, FS2 evaluates the gradient $2a_i(x_i-T_\eps(x_i))$ on the points. It builds a mask at the returned potentials by the same rule and replaces $T_\eps(x_i)$ by $T_\eps^{\cA}(x_i)$, the barycentric projection over the admitted tiles of the source block containing $i$. A kernel computes it by streaming $\sum_jP_{ij}y_j$ alongside the row sum, and the transposed index gives the target-side gradient in the same way. \Cref{prop:mask-error}(ii) bounds the error that the mask adds, and the potentials' distance from convergence and the kernel's arithmetic add the rest.

Each fine update has three costs: the box screen ($\Theta(N_BM_Bd)$), the verification ($\Theta(B^2d)$ per candidate), and both sparse half-steps ($\Theta(B^2d\,|\cA|)$). As the blur $\sqrt\eps$ falls to a few nearest-neighbour spacings, $|\cA|$ drops well below $N_BM_B$ and sparse updates become cheaper than dense ones. Beyond the points and potentials, working memory holds only the indices of the candidate and admitted tiles. Appendices~\ref{app:io-model} and~\ref{app:operation-count} give the IO model and the complete operation count.

\section{Experiments}
\label{sec:experiments}

We evaluate FS2 on 32 synthetic problems and on the Quijote cosmological $N$-body simulation, measuring the time each solver takes to return potentials whose full two-sided marginal residual, $R=\max(\|\bP\ones-\ba\|_1,\|\bP^\top\ones-\bb\|_1)$, is at most a tolerance $\tau$. On the synthetic problems (\Cref{sec:forward}), we compare FS2 with GeomLoss multiscale and FlashSinkhorn and isolate its three ingredients: the coarse stage, the sparse fine stage and the surrogate gradient for optimization. On Quijote (\Cref{sec:quijote}), we scale FS2 from $2^{24}$ particles to the full $2^{27}$-particle box.

Every solve runs on one A100, and FS2 receives centered, TF32-rounded coordinates. Any potentials induce the plan $P_{ij}=a_ib_je^{(f_i+g_j-C_{ij})/\eps}$, which is the exact EOT solution for its own marginals $\bP\ones$ and $\bP^\top\ones$ \citep{peyre2019computational}. Its error therefore lies entirely in these marginals, and $R$ measures how far they are from $\ba$ and $\bb$.

\subsection{Benchmarks}
\label{sec:forward}

\textbf{Setup.} We benchmark 32 problems in $d{=}3$ with uniform weights, 16 on clustered clouds of 32 Gaussian clusters and 16 on heterogeneous clouds that mix a compact Gaussian, a thin needle and a 1\% uniform background. Both families span $n=m$ from $2^{19}$ to $2^{23}$ and $\eps$ from 0.001 to 0.02. This range of $\eps$ puts the blur $\sqrt\eps$ between 0.032 and 0.14, or 3--14\% of the side of the unit cube of the clustered clouds. All methods receive the same coordinates and the full squared-Euclidean cost, GeomLoss always runs its multiscale solver, and the target is $\tau=0.005$. Appendix~\ref{app:benchmarks} gives the data, the protocol, the endpoint of every run and a sweep of GeomLoss's scaling and cluster scale.

\begin{figure}[t]
  \centering
  \includegraphics[width=\textwidth]{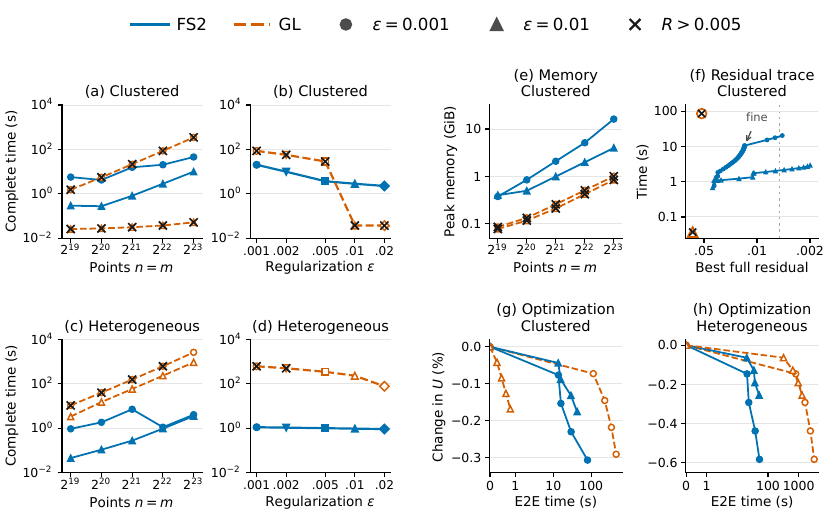}
  \caption{\textbf{FS2 vs GeomLoss (GL).} (a--d) Time of a complete solve on 26 problems: $n$ from $2^{19}$ to $2^{23}$ (a,c) and $\eps$ from 0.001 to 0.02 at $n{=}2^{22}$ (b,d). A cross marks a GL return with residual $R>\tau{=}0.005$. (e) Peak GPU memory allocated by PyTorch. (f) On clustered $n{=}2^{22}$, the time at which each new best full residual $R$ is reached, for FS2's potentials along the solve and for GL's native returns. The residual decreases to the right. The dotted line marks $\tau$ and the arrow marks the first fine update. (g,h) Change in the grouped objective $U$ over twenty optimization steps at $n{=}2^{21}$, against cumulative end-to-end (E2E) time, including compilation and autotuning.}
  \label{fig:benchmarks}
\end{figure}

\begin{table}[t]
\centering\small
\caption{FS2 against GeomLoss (GL) and FlashSinkhorn (FS) on the clustered $n{=}m{=}2^{21}$, $\eps{=}0.001$ problem. Bold marks solves that meet $\tau{=}0.005$. In (a), stage times are GPU kernel time from a separate profiled call, with sampled checks counted in the fine stage. GL$^{+}$ is GL at scaling 0.99, continued at the target $\eps$ until its full residual first meets $\tau$.}
\label{tab:clustered-solve}
\setlength{\tabcolsep}{4pt}
\begin{minipage}[t]{0.505\linewidth}
\centering\setlength{\tabcolsep}{2.5pt}
\textbf{(a)} Complete solve\\[3pt]
\begin{tabular}{@{}lrrrr@{}}
\toprule
 & FS2 & GL & GL$^{+}$ & FS \\
\midrule
Time (s) & \textbf{15.44} & 21.45 & \textbf{421.10} & 311.98 \\
\quad coarse stage & 2.94 & 0.005 & 0.022 & -- \\
\quad fine stage & 12.30 & 21.40 & 420.67 & 311.98 \\
Fine updates & 18 & 12 & 225 & 44 \\
\quad per update (s) & 0.68 & 1.78 & 1.87 & 7.09 \\
Memory (GiB) & 2.09 & 0.26 & 0.26 & 0.20 \\
Residual & \textbf{0.00483} & 0.05416 & \textbf{0.004997} & 0.05972 \\
\bottomrule
\end{tabular}

\end{minipage}\hfill
\begin{minipage}[t]{0.475\linewidth}
\centering
\textbf{(b)} One fine update per direction\\[3pt]
\begin{tabular}{@{}lrrr@{}}
\toprule
 & FS2 & GL & FS \\
\midrule
Point pairs ($10^{11}$) & 2.58 & 2.58 & 44.0 \\
Kernel time (ms) & 251 & 394 & 3{,}600 \\
ms per $10^9$ pairs & 0.97 & 1.53 & 0.82 \\
Warp instr.\ per pair & 0.28 & 0.63 & 0.34 \\
HBM read (B/$10^3$ pairs) & 14.9 & 0.32 & 4.35 \\
Tensor pipe (\%) & 21.1 & 0.07 & -- \\
Math-pipe stall (\%) & 2.3 & 17.4 & 9.0 \\
\bottomrule
\end{tabular}

\end{minipage}
\end{table}

\textbf{Speed.} FS2 meets $\tau$ on all 32 problems and GeomLoss on 10, all heterogeneous (\Cref{fig:benchmarks}a--d). At the returned potentials, the certificate of \Cref{sec:rule} confirms $R\le\tau$ for every return we evaluated on the 32 problems. Across 20 further sampling seeds on each of the seven problems closest to $\tau$, the check never accepted potentials whose residual exceeds $\tau$ (Appendix~\ref{app:certificate}). On clustered problems GeomLoss misses $\tau$ at every size and regularization. \Cref{tab:clustered-solve}a shows one such problem, $n{=}2^{21}$, in full: FS2 meets $\tau$ in 15.4\,s, GeomLoss ends its schedule after 21.45\,s at residual 0.054, and FlashSinkhorn needs 312\,s to reach 0.060, because each of its 44 updates evaluates all $n^2$ pairs. With its native schedule, GeomLoss misses $\tau$ on this problem at every scaling and cluster scale we tried, and the best setting needs 421\,s when continued at the target $\eps$ until it meets $\tau$ (GL$^{+}$). \Cref{fig:benchmarks}(f) shows how FS2 reaches $\tau$ on clustered $n{=}2^{22}$. At $\eps{=}0.001$, FS2 lifts after less than a second, because a coarse update evaluates $K_xK_y$ centroid pairs rather than $n^2$ point pairs, and the lifted residual is already below GeomLoss's final one. Further coarse updates then lower the residual, and the fine updates remove the remaining centroid error, bringing the residual below $\tau$ in about a quarter of GeomLoss's time. At $\eps{=}0.01$ the blur is $\sqrt{10}$ times wider relative to the cells, so the centroid error is smaller and the coarse stage alone reaches $\tau$. GeomLoss returns sooner but misses it.

On the ten problems where both solvers meet $\tau$, FS2 is $74$--$645\times$ faster than GeomLoss at its default cluster scale. The largest gap is on heterogeneous $n{=}2^{23}$ at $\eps{=}0.001$, where FS2 returns in 4.026\,s and GeomLoss takes 2595\,s. On all ten, FS2's first lift already passes the check, so its only work on the points is that lift and the check. GeomLoss's default gives far fewer cells, wider than the final blur, so its schedule ends with levels on the points, and it never checks its lift. FS2 picks its cell count from the input. We tested that count on one of these ten problems, where GeomLoss also stops after its coarse stage and comes close to FS2 but remains slower.

\begin{table}[t]
\small
\begin{minipage}[t]{0.58\linewidth}
\centering
\caption{Ablations on clustered $n{=}m{=}2^{20}$, $\eps{=}0.001$.}
\label{tab:components}
\setlength{\tabcolsep}{2pt}
\begin{tabular}{@{}lrrrr@{}}
\toprule
Control & Coarse & Fine & Time (s) & Time/FS2 \\
\midrule
FS2 & 2196 & 15 & 4.17 & 1 \\
No coarse stage & 0 & 130 & 153.33 & $R{>}\tau$ \\
Early coarse stop & 404 & 1115 & 215.67 & 51.59 \\
Dense fine & 2196 & 15 & 33.33 & 8.00 \\
Box only & 2196 & 15 & 4.34 & 1.04 \\
\bottomrule
\addlinespace[2pt]
\multicolumn{5}{@{}l}{\makebox[0pt][l]{\footnotesize Coarse, Fine: updates. Early coarse stop: fine cap 8192.}}
\end{tabular}

\end{minipage}\hfill
\begin{minipage}[t]{0.40\linewidth}
\centering
\caption{FS2 peak memory at $\eps{=}0.001$.}
\label{tab:memory}
\setlength{\tabcolsep}{2pt}
\begin{tabular}{@{}lrrrr@{}}
\toprule
Cloud & $n$ & $B$ & Admitted & Peak (GiB) \\
\midrule
Clustered & $2^{20}$ & 128 & 6.3\% & 0.85 \\
Heterog. & $2^{20}$ & 128 & 60.6\% & 2.72 \\
Heterog. & $2^{21}$ & 128 & 59.8\% & 9.89 \\
Clustered & $2^{23}$ & 128 & 5.2\% & 16.17 \\
Clustered & $2^{23}$ & 256 & 5.6\% & 6.88 \\
Heterog. & $2^{23}$ & 128 & no mask & 3.40 \\
\bottomrule
\end{tabular}

\end{minipage}
\end{table}

\textbf{Ablations.} Each row of \Cref{tab:components} removes or shortens one component of FS2. FS2 itself runs 2196 coarse updates and then needs just 15 fine updates. Without the coarse stage, the solve anneals on the points and misses $\tau$ even at the fine-update cap. Stopping the coarse stage after 404 updates instead leaves 1115 fine updates and makes the solve $51.6\times$ slower. With the same 15 fine updates, dense execution evaluates every tile rather than the 6.3\% the mask admits and is $8.00\times$ slower. The box screen alone, without tile verification, recovers almost all of this gain and is only 4\% slower.

\textbf{Profiling insight.} To see why FS2's fine update is fast, \Cref{tab:clustered-solve}b profiles one, in both directions, with NCU on an A100, next to a dense FlashSinkhorn update and GeomLoss's KeOps executor on FS2's mask and potentials. FS2's update evaluates only 5.9\% of the point pairs $(x_i,y_j)$ of the dense update, and NCU measures a $14\times$ shorter kernel time. Per pair, the sparse kernel is nearly as cheap as the dense one, 0.97 against 0.82\,ms per $10^9$ pairs, and executes fewer warp instructions. This holds although it gathers each admitted target block through the CSR index, while the dense kernel streams every block in order (\Cref{thm:io}). On the same mask, GeomLoss's KeOps executor takes $1.57\times$ longer and executes $2.3\times$ more instructions per pair, because FS2 computes the squared distances as tiled dot products on the tensor cores. FS2 even reads more HBM data per pair than GeomLoss, so on the same mask its speed comes from fewer instructions, not fewer bytes.

\textbf{Memory.} FS2 trades memory for speed by design. The selector keeps the coarse state and the block summaries within the L2 budget and, within it, picks the finest cells and blocks, so the coarse stage stays cheap and the mask screens at fine granularity. The price is the mask's index, 24 bytes per screened or admitted tile, which puts FS2's peak allocation at 8--10$\times$ that of either baseline in \Cref{tab:clustered-solve}a. The peak follows the mask (\Cref{tab:memory}). At $n{=}2^{20}$ the heterogeneous mask admits ten times the clustered share and needs three times the memory. At $n{=}2^{23}$ the heterogeneous solve returns its first lift and builds no mask, so its peak lies below the one at $2^{21}$. Without a mask, as at $\eps{=}0.01$ for $n{\ge}2^{20}$, the peak grows only as GeomLoss's does. At fixed $\eps$ the mask grows faster than $n$, because each point has more neighbours within the blur, so over the clustered sweep at $\eps{=}0.001$ the peak rises 40-fold while $n$ rises 16-fold (\Cref{fig:benchmarks}e). Larger blocks cut this cost. At $n{=}2^{23}$, $B=256$ instead of the selected 128 lowers the peak from 16.2 to 6.9\,GiB for slightly more time, and the selector itself takes $B=256$ for the $2^{27}$ Quijote run, whose peak stays within the 80\,GiB card.

\textbf{Optimization.}
\label{sec:optimization}
Many applications move points along an OT gradient and need a descent direction rather than an exact gradient. We test FS2's direction and its end-to-end cost by moving the source points to decrease their EOT cost $\OT_\eps$ (\Cref{eq:eot}) to a fixed target cloud, with no other term in the objective. FS2 steps along the surrogate gradient $\widetilde G$, which takes the barycentric projection over the $K_y$ coarse target centroids instead of the $m$ targets. A step therefore costs one coarse solve and one pass over the centroids, and needs no fine update. A small step along any direction whose relative error to the exact gradient is below one decreases the objective (\Cref{prop:coarse-direction}). On each of the 32 inputs, both methods take twenty normalized steps $\bX\leftarrow\bX-0.03\,H/\|H\|_F$, with $H=\widetilde G$ for FS2 and, for GeomLoss, the autograd gradient with respect to the source points, which costs about one forward solve per step. Each step moves the points by $0.03/\sqrt n$ RMS, so the twenty steps measure the direction at the initial points rather than following a long flow. Both methods' iterates are scored by the same grouped objective $U$, which fixes $K$ groups of the initial source points and of the targets and adds the EOT cost between the mass-weighted group centroids, solved accurately, to the mean squared distance of the points to their group centroids. This score ranks the iterates without a full-point solve, and the surrogate lowers it at every checkpoint on all 32 inputs. At $n{=}2^{21}$ (\Cref{fig:benchmarks}g,h), FS2 is $25$--$61\times$ faster end to end on heterogeneous data, with tied final scores. On clustered data it reaches a lower final $U$ at both $\eps$, and is also faster at $\eps{=}0.001$. GeomLoss is faster only at clustered $\eps{=}0.01$, where its forward solve takes 0.03\,s but misses $\tau$.

\subsection{Quijote Particle Transport}
\label{sec:quijote}

\textbf{Problem and data.} We test FS2 on one fiducial realization of the Quijote cosmological $N$-body suite, $512^3$ dark-matter particles in a periodic $1\,h^{-1}\mathrm{Gpc}$ box \citep{villaescusa2020quijote}. The sources are the redshift-zero positions $x_i$ and the targets are the Lagrangian lattice sites $q_i$, both with uniform weights. The particle IDs define the reference displacements but are hidden from the solve. Nested subdomains centered in the box give $n=m$ from $2^{24}$ to $2^{27}$, the full box. We fix the blur $\sqrt\eps\approx1.95\,\mathrm{Mpc}/h$ at one lattice spacing, the mean interparticle distance and 0.2--0.4\% of the domain width, and use the plain squared-Euclidean cost, which ignores the periodic wrap of the box. The evolved cloud is strongly clustered and far from the lattice. In the $2^{24}$ subdomain the median particle lies 4.6 blurs from its lattice site. At this blur the coarse stage alone does not reach the target, so every solve's fine updates start from the first lift. We set the cell level $t$ and the fine-update cap for each size. Appendix~\ref{app:quijote} gives the data, the protocol and every run.

\begin{figure}[t]
\centering
\includegraphics[width=\linewidth]{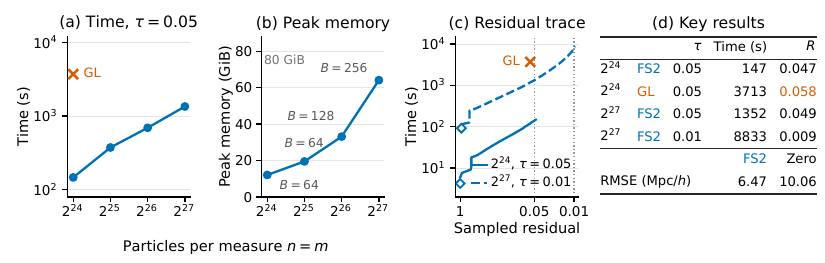}
\caption{\textbf{FS2 on Quijote.} (a) Time of a complete warm call of FS2 at $\tau{=}0.05$. The cross marks GL's instrumented call at $2^{24}$. (b) Peak GPU memory allocated by PyTorch, and the selected block size $B$. (c) Best sampled residual versus time. Diamonds mark the first fine update and dotted lines the targets. (d) Key results. Orange marks a return with residual above $\tau$, and RMSE is the displacement error of the $2^{27}$ solution at $\tau{=}0.01$ and of a zero-displacement predictor.}
\label{fig:quijote-scaling}
\end{figure}

\textbf{Scaling.} At $\tau=0.05$, FS2 meets the target at all four sizes of \Cref{fig:quijote-scaling}, verified on sampled points at the two intermediate sizes. From $2^{24}$ to $2^{27}$ particles, the time grows 9.2-fold for an 8-fold increase in $n$, and the peak memory stays within the 80\,GiB card. As on the clustered benchmarks, the fine stage does most of the work. It begins within the first 3\% of each run's time and lowers the sampled residual from about 1 to the target. GeomLoss multiscale at scaling 0.95 takes 25 times as long as FS2 at $2^{24}$ and still misses $\tau$. Its time grows almost quadratically with $n$ on the synthetic benchmarks, so we run it only at $2^{24}$. At $\tau=0.01$, a first complete call of FS2 solves the full box, $2^{54}$ particle pairs, on one A100 in under 2.5 hours. Its full marginal residual is below 0.01, even after an FP64 correction estimated on sampled points. The certificate of \Cref{sec:rule} gives the same bound from one mask build at the returned potentials, in minutes rather than the hours of the full evaluation. To our knowledge, the full box is the largest discrete EOT problem solved to this accuracy within hours, and it matches the particle count of the semi-discrete reconstruction of \citet{levy2025large}.

\textbf{Displacement reconstruction.} The plan also estimates each particle's displacement. On 4096 particles sampled from the $2^{27}$, $\tau=0.01$ solution, the barycentric projection $T_\eps(x_i)$ of \Cref{sec:flashsinkhorn}, taken over all $2^{27}$ targets, gives displacements $x_i-T_\eps(x_i)$ whose RMSE against the particle-ID displacements $x_i-q_i$, both wrapped to the periodic box, is $35.7\%$ below that of predicting no displacement (\Cref{fig:quijote-scaling}d). The three components correlate equally well with the reference, and particles whose reference trajectories cross a periodic face have larger errors than the rest.

\section{Conclusion and Limitations}
\label{sec:conclusion}

We introduced FlashSinkhorn~2 (FS2), which solves large discrete EOT problems to a prescribed marginal residual by coupling a coarse stage on cell centroids with a block-sparse fine stage that removes the centroid error. On synthetic benchmarks, FS2 reaches the target residual on all 32 problems and GeomLoss on 10. On one A100 it solves EOT between two $1.34\times10^8$-particle measures, at a blur of one lattice spacing, to a marginal residual below 0.01 in under 2.5 hours.

FS2 targets squared-Euclidean cost in $d\le3$, where a blur-sized ball holds few points and screening pays off. Reaching the $10^{10}$ points that \citet{levy2025large} set as the goal for cosmology needs blocks spread over several GPUs, because the coordinates of one measure alone take 112\,GiB in FP32, more than an A100's 80\,GiB. It also needs a sub-quadratic pair search, such as a dual-tree traversal \citep{lee2011dualtree}, because the box screen tests every block pair.

\subsection*{AI use statement}
Generative AI tools (Anthropic's Claude through Claude Code, and OpenAI's Codex) assisted with code, experiments, figures and editing. The authors checked every reported number against the experiment records and take full responsibility for the content.

\bibliography{arxiv_references}
\bibliographystyle{iclr2027_conference}

\appendix

\section{Proofs for \texorpdfstring{\Cref{sec:method}}{Section 3}}
\label{app:proofs}

\subsection{Proof of \texorpdfstring{\Cref{prop:coarse-lift-residual}}{Proposition 1}}
\label{app:coarse-lift-bound}

Let $f^c,g^c$ be finite coarse potentials, not necessarily converged, with coarse plan and target marginal ratios
\[
P^c_{IJ}=a_Ib_Je^{(f^c_I+g^c_J-C(x^c_I,y^c_J))/\eps},
\qquad
c_J=\frac{\sum_IP^c_{IJ}}{b_J}.
\]
The first lift defines weights $w_{iJ}=b_Je^{(f_i+g^c_J-C(x_i,y^c_J))/\eps}$ with $\sum_Jw_{iJ}=1$, and $\pi_{I\mid J}=P^c_{IJ}/(b_Jc_J)$ is the conditional distribution over source centroids for target cell $J$, with mean $u_J=\sum_I\pi_{I\mid J}x^c_I$. For $j\in J$ write $z_j=y_j-y^c_J$ and
\[
\phi_{iJ}(z)=\Big[\sum_I\pi_{I\mid J}e^{2(x^c_I-x_i)^\top z/\eps}\Big]^{-1}.
\]
Substituting both lifts into the fine plan and expanding the squared distances, the two copies of $e^{-\|z_j\|^2/\eps}$ cancel, and
\begin{equation}
\frac{(\bP\ones)_i}{a_i}
=\sum_J\frac{w_{iJ}}{c_J}\sum_{j\in J}\frac{b_j}{b_J}\,\phi_{iJ}(z_j).
\label{eq:lift-identity}
\end{equation}

\begin{proof}
Fix $J$ and $z=z_j$, and put $\ell_I=2(x^c_I-x_i)^\top z/\eps$ and $F(s)=(\sum_I\pi_{I\mid J}e^{s\ell_I})^{-1}$, so that $\phi_{iJ}(z)=F(1)$. Exact centroids are convex combinations of points, so $|\ell_I|\le2\operatorname{diam}(\bX)R_Y/\eps=\theta$. With $\mu_s$ and $v_s$ the mean and variance of $\ell_I$ under probabilities proportional to $\pi_{I\mid J}e^{s\ell_I}$, we have $F'(s)=-F(s)\mu_s$ and $F''(s)=F(s)(\mu_s^2-v_s)$. Since $\mu_s^2+v_s\le\theta^2$ and $F(s)\le e^{s\theta}$ on $[0,1]$, $|F''(s)|\le e^{s\theta}\theta^2$, and
\[
|F(1)-1-F'(0)|\le\int_0^1(1-s)e^{s\theta}\theta^2\,ds=e^\theta-1-\theta.
\]
The cell average of $F'(0)=-2(u_J-x_i)^\top z/\eps$ vanishes because $y^c_J$ is the exact mass-weighted centroid, so $|\sum_{j\in J}(b_j/b_J)\phi_{iJ}(z_j)-1|\le e^\theta-1-\theta$. Substituting into \Cref{eq:lift-identity} with $\sum_Jw_{iJ}=1$, $c_J^{-1}\le e^{\lambda_c}$ and $|c_J^{-1}-1|\le e^{\lambda_c}-1$ gives, for every row,
\[
\Big|\frac{(\bP\ones)_i}{a_i}-1\Big|
\le(e^{\lambda_c}-1)+e^{\lambda_c}(e^\theta-1-\theta),
\]
and multiplying by $a_i$ and summing proves \Cref{eq:coarse-lift-bound}. The column bound exchanges the roles of the two clouds.
\end{proof}

Exact coarse convergence does not remove the second term. Let both measures put mass $1/2$ at $-h$ and $h$, aggregated into one cell at zero. Then $f^c=g^c=0$ is optimal, the lifted potentials are $f_i=g_j=h^2$, and $\|\bP\ones-\ba\|_1=\|\bP^\top\ones-\bb\|_1=\cosh(2h^2/\eps)-1>0$.

\textbf{Size of the bound on the grid.} On the 32 grid problems of \Cref{app:speed}, $\theta$ ranges from 2.1 to 143 for rows and from 2.1 to 145 for columns, so the second term of \Cref{eq:coarse-lift-bound} exceeds 4.8 everywhere and the bound never accepts a lift, while the 20 lifted returns have residuals of at most 0.0041. The two terms still separate the two ways a lift fails. On clustered problems with $\eps\ge0.002$, extra coarse updates lower the lifted residual. On heterogeneous problems at $\eps=0.001$ and $n\le2^{21}$, and on clustered $n=2^{19}$ at $\eps=0.001$, it stays flat, and fine updates are required.

\textbf{Stored centroids.} The proof uses exact centroids. FS2 rounds the aggregated coordinates once to TF32, so that the same stored coordinates supply the norms, the dot products and the lift, and this rounding changes the potentials by about $1.6\times10^{-4}$ and the lift's residual by 0.5\%.

\subsection{Coordinate Gradients from Coarse Potentials (\texorpdfstring{\Cref{prop:coarse-barycentric,prop:coarse-direction}}{Propositions 3 and 4})}
\label{app:coarse-gradient}

At each outer step the optimization rebuilds the geometry at the current iterate $\bX$ and anneals on the cells until it first reaches $\eps$. The target centroids $y^c_J$, masses $b_J$ and potentials $g^c_J$ are then held fixed, so gradients do not pass through the partition, the centroids or the coarse updates. With the source coordinates replaced by $Z=(z_i)_i$, let
\[
w_{iJ}(Z)=
\frac{b_J\exp((g_J^c-\|z_i-y_J^c\|^2)/\eps)}
{\sum_L b_L\exp((g_L^c-\|z_i-y_L^c\|^2)/\eps)},
\qquad
\widetilde T_i(Z)=\sum_J w_{iJ}(Z)\,y_J^c.
\]
At $Z=\bX$ these are the lift weights of \Cref{app:coarse-lift-bound} and the projection $\widetilde T_i$ of \Cref{sec:coarse-direction}, and direct differentiation gives $\nabla_{z_i}\widetilde{\mathcal L}(Z)=2a_i(z_i-\widetilde T_i(Z))$, the gradient formula of \Cref{sec:coarse-direction}.

The surrogate gradient can differ from a gradient built on a fine barycentric projection in two ways. Let $p_{ij}$ be any normalized conditional distribution over the target points for source row $i$, $p_{iJ}=\sum_{j\in J}p_{ij}$, and, when $p_{iJ}>0$, $\bar y_{iJ}=p_{iJ}^{-1}\sum_{j\in J}p_{ij}y_j$. With $w_{iJ}=w_{iJ}(\bX)$, let $r_J=\max_{j\in J}\|y_j-y_J^c\|$ and $\operatorname{TV}_i=\frac12\sum_J|p_{iJ}-w_{iJ}|$.

\begin{proposition}
\label{prop:coarse-barycentric}
For $T_i=\sum_jp_{ij}y_j$ and $\widetilde T_i=\sum_Jw_{iJ}y_J^c$,
\begin{equation}
\|T_i-\widetilde T_i\|
\le \sum_Jp_{iJ}r_J
+\operatorname{diam}(\bY^c)\,
\operatorname{TV}_i.
\label{eq:coarse-barycentric-error}
\end{equation}
Consequently, for $G_i=2a_i(x_i-T_i)$ and $\widetilde G_i=2a_i(x_i-\widetilde T_i)$,
\begin{equation}
\|G-\widetilde G\|_F
\le 2\left[\sum_i a_i^2
\left(\sum_Jp_{iJ}r_J+
\operatorname{diam}(\bY^c)\operatorname{TV}_i
\right)^2\right]^{1/2}.
\label{eq:coarse-gradient-error}
\end{equation}
\end{proposition}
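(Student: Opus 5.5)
The plan is to split the error $T_i-\widetilde T_i$ into a within-cell part and a between-cell part. I would insert the intermediate point $\sum_J p_{iJ}y^c_J$, which replaces each fine conditional cell mean by its centroid but keeps the fine cell masses. First I write $T_i=\sum_{J:\,p_{iJ}>0}p_{iJ}\bar y_{iJ}$; cells with $p_{iJ}=0$ contribute nothing. This gives the decomposition
\[
T_i-\widetilde T_i=\sum_J p_{iJ}\big(\bar y_{iJ}-y^c_J\big)+\sum_J\big(p_{iJ}-w_{iJ}\big)y^c_J .
\]
I then bound the two sums separately and combine them with the triangle inequality.

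For the first sum, each $\bar y_{iJ}$ is a convex combination of the points $y_j$ with $j\in J$. Each such point lies within $r_J$ of $y^c_J$, and the ball of radius $r_J$ about $y^c_J$ is convex. Hence $\|\bar y_{iJ}-y^c_J\|\le r_J$, and the first sum has norm at most $\sum_Jp_{iJ}r_J$.

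For the second sum, the key observation is that both $(p_{iJ})_J$ and $(w_{iJ})_J$ are probability vectors. For $p$ this holds because $p_{ij}$ is normalized; for $w$ it is the normalization built into the definition of $w_{iJ}(Z)$. So $\delta_J=p_{iJ}-w_{iJ}$ sums to zero, and I can subtract any fixed vector $y_0$ from every $y^c_J$ without changing the sum. I write $\delta=\delta^+-\delta^-$, where both parts have total mass $\operatorname{TV}_i$. Then
\[
\sum_J\delta_Jy^c_J=\operatorname{TV}_i\,(\bar u^+-\bar u^-),
\]
where $\bar u^\pm$ are the $\delta^\pm$-weighted averages of the centroids. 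Both averages lie in the convex hull of $\bY^c$, whose diameter equals $\operatorname{diam}(\bY^c)$. This yields the bound $\operatorname{diam}(\bY^c)\operatorname{TV}_i$ and completes \eqref{eq:coarse-barycentric-error}. This is the only step needing care. The factor $\tfrac12$ in $\operatorname{TV}_i$ is exactly absorbed by the positive/negative split, and a cruder bound by $\max_J\|y^c_J\|$ would lose the factor and the translation invariance.

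Finally, $G_i-\widetilde G_i=-2a_i(T_i-\widetilde T_i)$ holds row by row, because the $x_i$ terms cancel. Squaring the row bound, multiplying by $4a_i^2$, summing over $i$ and taking the square root gives \eqref{eq:coarse-gradient-error} directly.
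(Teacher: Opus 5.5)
Your proposal is correct and follows the paper's proof essentially step for step. It uses the same intermediate barycentre $\sum_J p_{iJ}y^c_J$, the same within-cell bound by $r_J$, the same positive/negative split of $p_{iJ}-w_{iJ}$ into two convex combinations of centroids with mass $\operatorname{TV}_i$, and the same row-wise squaring for the gradient. The only point the paper states that you leave implicit is the degenerate case $\operatorname{TV}_i=0$: there $\bar u^\pm$ are undefined, but the between-cell sum vanishes trivially.
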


\begin{proof}
Insert the intermediate barycentre $T_i^c=\sum_Jp_{iJ}y_J^c$. The within-cell term satisfies
\[
\|T_i-T_i^c\|
=\left\|\sum_Jp_{iJ}(\bar y_{iJ}-y_J^c)\right\|
\le\sum_Jp_{iJ}r_J.
\]
The differences $p_{iJ}-w_{iJ}$ sum to zero, so their positive and negative parts both have total mass $\operatorname{TV}_i$. If $\operatorname{TV}_i=0$, the two centroid averages coincide. Otherwise
\[
T_i^c-\widetilde T_i
=\operatorname{TV}_i\left(\sum_J\frac{(p_{iJ}-w_{iJ})_+}{\operatorname{TV}_i}y_J^c
        -\sum_J\frac{(w_{iJ}-p_{iJ})_+}{\operatorname{TV}_i}y_J^c\right).
\]
The two sums in parentheses are convex combinations of the same centroid set and their distance is at most its diameter. This proves $\|T_i^c-\widetilde T_i\| \le\operatorname{diam}(\bY^c)\operatorname{TV}_i$. The first claim follows by another triangle inequality. Multiplying row $i$ by $2a_i$, squaring, and summing proves \Cref{eq:coarse-gradient-error}.
\end{proof}

The first term of \Cref{eq:coarse-barycentric-error} is quantization and the second the difference in conditional weights, which includes coarse solve error. With $p_{ij}=P^\star_{ij}/a_i$, $G$ is the gradient of the converged balanced OT value.

\begin{proposition}
\label{prop:coarse-direction}
Let $G=\nabla\mathcal L(\bX)\ne0$, let $H\ne0$ be a supplied direction, and write $c=\langle G,H\rangle/(\|G\|_F\|H\|_F)$. Suppose the gradient of $\mathcal L$ is $\beta$-Lipschitz, with $\beta>0$, on the segment from $\bX$ to $\bX^+=\bX-\eta_{\mathrm{opt}}H/\|H\|_F$. Then
\begin{equation}
\mathcal L(\bX^+)\le\mathcal L(\bX)
-\eta_{\mathrm{opt}}\|G\|_F c+\frac{\beta\eta_{\mathrm{opt}}^2}{2}.
\label{eq:inexact-descent}
\end{equation}
If $c>0$ and $0<\eta_{\mathrm{opt}}<2\|G\|_F c/\beta$, the objective decreases. A sufficient condition for $c>0$ is $\|H-G\|_F\le\delta\|G\|_F$ with $0\le\delta<1$, which implies $c\ge\sqrt{1-\delta^2}$.
\end{proposition}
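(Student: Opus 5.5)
This is the standard inexact-gradient descent lemma. The plan has two parts: first the quadratic upper bound, then the cosine estimate.

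\textbf{Quadratic upper bound.} Write $u=H/\|H\|_F$ and $\bX^+=\bX-\eta_{\mathrm{opt}}u$. Define $\psi(s)=\mathcal L(\bX-s\,u)$ on $[0,\eta_{\mathrm{opt}}]$. This curve stays on the segment where $\nabla\mathcal L$ is $\beta$-Lipschitz, so $\psi$ is $C^1$ with
\[
\psi'(s)=-\langle\nabla\mathcal L(\bX-su),u\rangle .
\]
Then
\[
\psi(\eta_{\mathrm{opt}})-\psi(0)=-\eta_{\mathrm{opt}}\langle G,u\rangle-\int_0^{\eta_{\mathrm{opt}}}\langle\nabla\mathcal L(\bX-su)-G,u\rangle\,ds .
\]
Bound the integrand by Cauchy--Schwarz and the Lipschitz condition: it is at most $\beta s\|u\|_F^2=\beta s$, so the integral contributes at most $\beta\eta_{\mathrm{opt}}^2/2$. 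Since $\langle G,u\rangle=\|G\|_F c$ by the definition of $c$, this gives \eqref{eq:inexact-descent}.

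\textbf{Decrease.} The right-hand side equals
\[
\mathcal L(\bX)-\eta_{\mathrm{opt}}\bigl(\|G\|_F c-\beta\eta_{\mathrm{opt}}/2\bigr).
\]
The bracket is positive exactly when $\eta_{\mathrm{opt}}<2\|G\|_F c/\beta$. So whenever $c>0$, the objective strictly decreases.

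\textbf{Cosine bound.} This is the only step needing care, because $H$ may have a different norm than $G$. Write $H=G+E$ with $\|E\|_F\le\delta\|G\|_F$. The idea is that $H$ lies in the ball of radius $\delta\|G\|_F$ centred at $G$. The smallest cosine between $G$ and any point of that ball is attained at a tangent point, where the angle $\alpha$ satisfies $\sin\alpha=\delta$, giving $c\ge\sqrt{1-\delta^2}$.

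To make this algebraic, fix $\|H\|_F=h$. Expanding $\|H-G\|_F^2\le\delta^2\|G\|_F^2$ gives
\[
2\langle G,H\rangle\ge h^2+(1-\delta^2)\|G\|_F^2 .
\]
AM--GM then gives
\[
h^2+(1-\delta^2)\|G\|_F^2\ge 2h\sqrt{1-\delta^2}\,\|G\|_F .
\]
Dividing by $2h\|G\|_F$ yields $c\ge\sqrt{1-\delta^2}$, which is positive when $\delta<1$.

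Also note that $H\neq0$ is automatic when $\delta<1$, since $\|H\|_F\ge(1-\delta)\|G\|_F>0$.
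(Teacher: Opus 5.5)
Your proof is correct and follows the paper's argument. The paper cites the descent lemma, whereas you derive it from the integral form along the segment, which has the small merit of using only the segment-local Lipschitz hypothesis. Your cosine bound is the paper's expansion of $\|H-G\|_F^2\le\delta^2\|G\|_F^2$ followed by AM--GM, written without first normalizing by $\|G\|_F$.
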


\begin{proof}
Let $v=H/\|H\|_F$. The descent lemma for a $\beta$-smooth function gives
\[
\mathcal L(\bX-\eta_{\mathrm{opt}} v)
\le\mathcal L(\bX)-\eta_{\mathrm{opt}}\langle G,v\rangle
+\frac{\beta\eta_{\mathrm{opt}}^2}{2}.
\]
Since $\langle G,v\rangle=\|G\|_F c$, this is \Cref{eq:inexact-descent}, and the stated step-size condition makes its right-hand side strictly smaller than $\mathcal L(\bX)$.

For the relative-error claim, set $u=G/\|G\|_F$, $h=H/\|G\|_F$, and $r=\|h\|_F>0$. If $c=\langle u,h\rangle/r$, then $\|h-u\|_F^2=r^2+1-2rc\le\delta^2$, and therefore
\[
c\ge\frac{r^2+1-\delta^2}{2r}
=\frac12\left(r+\frac{1-\delta^2}{r}\right)
\ge\sqrt{1-\delta^2},
\]
where the last step is the arithmetic--geometric mean inequality.
\end{proof}

\subsection{Proof of \texorpdfstring{\Cref{thm:rule}}{Theorem 1}}
\label{app:proof}

\begin{proof}
Write the plan at the current potentials, with no convergence assumed, as
\begin{equation}
P_{ij}(\mf,\mg)=a_i\,b_j\exp\!\big((f_i+g_j-C_{ij})/\eps\big).
\label{eq:planapp}
\end{equation}
Every exponent in the tile is at most $m_{IJ}$, so $P_{ij}\le a_ib_je^{m_{IJ}}$ for $i\in I$, $j\in J$, and summing over the opposite block gives
\begin{equation}
\sum_{j'\in J}P_{ij'}\le a_i\,b_J\,e^{m_{IJ}},
\qquad
\sum_{i'\in I}P_{i'j}\le b_j\,a_I\,e^{m_{IJ}}.
\label{eq:tilebound}
\end{equation}
Since $e^{\rho_{IJ}}=\delta_{\mathrm{rel}}/\max(a_I,b_J)$ and both block masses are at most $\max(a_I,b_J)$, the hypothesis $m_{IJ}\le\rho_{IJ}$ bounds the two right-hand sides by $\delta_{\mathrm{rel}}a_i$ and $\delta_{\mathrm{rel}}b_j$.
\end{proof}

One bound thus serves both directions, because the row bound carries the factor $b_J/\max(a_I,b_J)$, the column bound carries $a_I/\max(a_I,b_J)$, and one of the two is always 1. The box screen is conservative, because $f_i\le\bmx{f}_I$, $g_j\le\bmx{g}_J$ and $C_{ij}\ge d^2_{\min}(I,J)$ give
\begin{equation}
m_{IJ}\le\big(\bmx{f}_I+\bmx{g}_J-d^2_{\min}(I,J)\big)/\eps,
\label{eq:worstmember}
\end{equation}
so a pair that fails \Cref{eq:boxtest} has $m_{IJ}\le\rho_{IJ}$. Both bounds use only pointwise inequalities at the current potentials and hold at every level of the schedule. The screen depends on the potentials only through $\bmx{f}_I+\bmx{g}_J$, so it is invariant under the shift $\mf\mapsto\mf+\kappa_0$, $\mg\mapsto\mg-\kappa_0$, which leaves the plan unchanged. A rule built on $\max(\bmx{f}_I,\bmx{g}_J)$ would give different masks for these equivalent potential pairs.

\textbf{Non-uniform weights.} Every benchmark uses uniform weights, for which $\rho_{IJ}$ is the constant $\log(\tau/4)$ whenever $B$ divides $n=m$. We rerun the two problems of \Cref{tab:mask-projection} with the non-uniform weights of FlashSinkhorn's public benchmark code \citep{flashsinkhorn2026}, $a_i\propto u_i+0.1$ with $u_i$ uniform on $[0,1]$, and with a rank$^{-1/2}$ power law. FS2 meets $\tau$ in all four cases, and the per-point bound of \Cref{thm:rule} holds everywhere. Against a scalar rule that uses the largest block mass for every pair, the mass-dependent rule admits 0.3--0.5\% fewer block pairs under the first weighting and 2.9--5.3\% fewer under the power law, with the same number of fine updates.

\subsection{Proof of \texorpdfstring{\Cref{prop:mask-error}}{Proposition 2}}
\label{app:sparse-update-error}

\begin{proof}
\textup{(i)} Write $S_i=\sum_jb_je^{(g_j-C_{ij})/\eps}$. The common factor $a_ie^{f_i/\eps}$ cancels from the omitted mass ratio, so the restricted sum is $S_i^{\cA}=(1-\nu_i)S_i$ and
\[
f_i^{+,\,\cA}-f_i^+=-\eps\log S_i^{\cA}+\eps\log S_i=-\eps\log(1-\nu_i).
\]
The column update is identical.

\textup{(ii)} The masked projection $T_\eps^{\cA}(x_i)$ restricts both sums of $T_\eps(x_i)=\sum_jP_{ij}y_j/\sum_jP_{ij}$ to the admitted columns. Subtracting it from the full projection gives the exact identity
\begin{equation}
T_\eps(x_i)-T_\eps^{\cA}(x_i)
=\frac{\sum_{J:(I,J)\notin\cA}\sum_{j\in J}
P_{ij}\big(y_j-T_\eps^{\cA}(x_i)\big)}{\sum_j P_{ij}},
\label{eq:mask-displacement}
\end{equation}
whose denominator sums over all targets, and the triangle inequality gives
\begin{equation}
\|T_\eps^{\cA}(x_i)-T_\eps(x_i)\|
\le\frac{\sum_{J:(I,J)\notin\cA}\sum_{j\in J}
P_{ij}\|y_j-T_\eps^{\cA}(x_i)\|}{\sum_j P_{ij}}
\le\nu_i\,\mathrm{diam}(\bY).
\label{eq:mask-displacement-bound}
\end{equation}
\end{proof}

The middle term of \Cref{eq:mask-displacement-bound} keeps the distance of each omitted contribution, so it is much smaller than the diameter bound when the omitted mass lies near the masked projection. The diameter bound is nevertheless sharp. Take two identical one-dimensional clouds, each with $B$ points at 0 and $B$ at $D>0$, uniform weights and one block per location. At the balanced optimum every row sends the fraction $\nu_i=(1+e^{D^2/\eps})^{-1}$ to the opposite location, so when $\nu_i\le\delta_{\mathrm{rel}}<1-\nu_i$ the admission rule keeps only the two same-location tiles. For $x_i=0$, $T^{\cA}_\eps(x_i)=0$ while $T_\eps(x_i)=\nu_iD$. By \textup{(ii)}, the source gradient $2a_i(x_i-T_\eps(x_i))$ changes by at most $2a_i\nu_i\operatorname{diam}(\bY)$ when the masked projection is used.

\textbf{Measured error.} \Cref{tab:mask-projection} evaluates \Cref{prop:mask-error} at the potentials FS2 returns on the clustered and heterogeneous $n=m=2^{20}$, $\eps=0.001$ problems, with the production $\delta_{\mathrm{rel}}$ and with $64\delta_{\mathrm{rel}}$ as a positive control. One FP64 pass over all $n\times m$ pairs splits each row's mass and first moment into admitted and omitted sums. No row exceeds either bound, and the looser mask raises the largest error 50--70-fold, so the production values are not a floor of the measurement. The block-sparse gradient kernel, run on the production mask, adds its own arithmetic error. Against the FP64 masked gradient it is within $1.0\times10^{-4}$ (clustered) and $1.0\times10^{-2}$ (heterogeneous) in relative $\ell_2$ in FP32, far above the masking error of the table.

\begin{table}[t]
\centering\footnotesize
\setlength{\tabcolsep}{3pt}
\caption{\Cref{prop:mask-error} at FS2's returned potentials, $n=m=2^{20}$, $\eps=0.001$: largest values over all rows. The $\nu_i$ column divides by the \Cref{thm:rule} bound $M_B\delta_{\mathrm{rel}}a_i/r_i$, the error column by $\nu_i\operatorname{diam}(\bY)$, and the gradient compares $2a_i(x_i-T^{\cA}_\eps(x_i))$ with $2a_i(x_i-T_\eps(x_i))$.}
\label{tab:mask-projection}
\begin{tabular}{@{}lrrrrrrr@{}}
\toprule
 & $\delta_{\mathrm{rel}}$ & & & $\nu_i$ over & & $\|T^{\cA}_\eps-T_\eps\|$ & Gradient \\
Cloud & factor & Admitted & Max $\nu_i$ & Thm.~1 bound & Max error & over $\nu_i\operatorname{diam}(\bY)$ & rel.\ $\ell_2$ \\
\midrule
Clustered & $1$ & 6.33\% & $6.7\times10^{-6}$ & 0.005 & $2.7\times10^{-6}$ & 0.48 & $1.6\times10^{-7}$ \\
Clustered & $64$ & 5.32\% & $3.7\times10^{-4}$ & 0.005 & $1.4\times10^{-4}$ & 0.48 & $1.3\times10^{-5}$ \\
Heterog. & $1$ & 60.54\% & $2.9\times10^{-5}$ & 0.023 & $2.4\times10^{-6}$ & 0.23 & $9.1\times10^{-6}$ \\
Heterog. & $64$ & 49.24\% & $3.0\times10^{-3}$ & 0.038 & $1.7\times10^{-4}$ & 0.23 & $1.8\times10^{-3}$ \\
\bottomrule
\end{tabular}

\end{table}

\subsection{IO Model (\texorpdfstring{\Cref{thm:io}}{Proposition 5})}
\label{app:io-model}

The dense FlashSinkhorn kernel moves $\Theta(nd+md+nmd^2/M)$ scalars between HBM and fast memory of size $M$ \citep{flashsinkhorn2026}. For the sparse traversal of \Cref{alg:fs2_lse}, the count is governed by neighbour-record misses.

\begin{proposition}
\label{thm:io}
Let a neighbour block record occupy $\sigma=\Theta(B(d+1))$ scalars. Suppose one source block's queries and accumulators stay resident in fast memory of size $M\ge c_0\sigma$ for a fixed constant $c_0$, costs are computed online, CSR indices are read once per traversal, and caches are cold at the start of each half-step. If $L_f$ and $L_g$ count neighbour-record misses along the CSR and its transpose, one iteration evaluating both directions moves
\begin{equation}
Q_{\mathrm{iter}}=
\Theta\!\big((n+m)(d+1)+|\cA|+\sigma[L_f+L_g]\big)
\label{eq:io}
\end{equation}
scalars between slow and fast memory. Each miss count lies between the number of active neighbour blocks in its traversal and $|\cA|$.
\end{proposition}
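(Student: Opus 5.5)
The plan is to count traffic separately for the three kinds of data that cross between slow and fast memory during one iteration. These are the point data and potentials, the CSR index, and the neighbour block records. I will bound each kind from above and below, and add the two half-steps at the end. I treat the $f$ half-step along the CSR; the $g$ half-step along the transpose is symmetric, with the roles of $\bX$ and $\bY$ exchanged.

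\textbf{Point data, potentials and index.} Each source block's queries (coordinates and shifted potential, $d+1$ scalars per point) are loaded once, because by hypothesis they stay resident while that block's row of the CSR is traversed. Its accumulators (running max and sum) are written once. Over all source blocks this gives $\Theta(n(d+1))$ scalars. The lower bound holds because every potential must be read and written. The CSR row pointers and column indices are read once per traversal, giving $\Theta(N_B+|\cA|)=\cO(n+|\cA|)$ scalars. At least $|\cA|$ of them must be read, since each admitted tile is identified by its index.

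\textbf{Neighbour records.} Costs are computed online, as in the FlashSinkhorn kernel, so the scores never leave chip and contribute no traffic. Every other scalar moved is a neighbour record of $\sigma$ scalars, fetched on a miss. By definition this contributes $\sigma L_f$.

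For the range of $L_f$, note that caches are cold at the start of the half-step. Each active neighbour block, meaning one that appears in at least one admitted tile, must therefore be loaded at least once, which gives the lower bound. There is at most one miss per CSR entry visited, which gives $L_f\le|\cA|$. The condition $M\ge c_0\sigma$ guarantees that a record can be held while it is combined with the resident source block. Hence each visit costs $\Theta(\sigma)$ and not more, and the GEMM on that tile needs no further reloads.

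Finally I add the two half-steps. Target-side point data contribute $\Theta(m(d+1))$, and the transpose index contributes $|\cA|$ again. Summing yields $\Theta((n+m)(d+1)+|\cA|+\sigma[L_f+L_g])$.

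\textbf{Main obstacle.} The hard part is the matching lower bound under a general cache model. In particular, fetching a record must cost $\Omega(\sigma)$ and not be amortised away by partial reuse. I would handle this by defining a miss as a full record load, which makes the count an identity given the traversal, so that only the bounds on $L_f$ and $L_g$ need argument.
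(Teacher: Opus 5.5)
Your proposal is correct and follows the paper's proof essentially step for step. Resident source blocks give $\Theta((n+m)(d+1))$ for the queries and written-back potentials, the CSR and its transpose are each read once for $\Theta(N_B+|\cA|)$, every miss moves one record of $\sigma$ scalars, and cold caches together with at most one charged miss per admitted pair place $L_f$ and $L_g$ between the number of active neighbour blocks and $|\cA|$. The only bookkeeping the paper adds is that a block with no admitted tile reads no neighbour record, and its dense repair is charged to the dense-update term of the operation count rather than to $Q_{\mathrm{iter}}$.
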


\begin{proof}
We count the $\hat\mf$ half-step, and the $\hat\mg$ half-step is the same count on the transposed index. Each of the $N_B$ source blocks moves its queries $\bX_I$ and writes back $\hat\mf_I$ once, $\Theta(B(d+1))$ scalars, because both stay resident for the whole row. Over both directions this is $\Theta((n+m)(d+1))$. The traversal reads $\mathrm{crow}$ once and each entry of $\mathrm{col}$ once, $\Theta(N_B+|\cA|)$ scalars, and $N_B\le n$ is absorbed by the first term. An empty row reads no neighbour record. Its repair is a dense reduction, counted with the dense updates in \Cref{eq:forward-operation-count}. Visiting an admitted pair needs $\bY_J$, $\hat\mg_J$ and $\bdelta_J$, $\sigma=\Theta(B(d+1))$ scalars, and moves nothing when the record is already resident. The $L_f$ visits that miss contribute $\Theta(\sigma L_f)$. Summing gives \Cref{eq:io}. Each half-step starts cold, so every active neighbour block misses at least once. At most one miss is charged per admitted pair, so $L_f$ lies between the number of active neighbour blocks and $|\cA|$, and so does $L_g$.
\end{proof}

On-chip memory is private to each program, and within one row the CSR entries are distinct, so a neighbour record can be reused across source blocks only through L2. The upper end, one gather per admitted pair, is therefore the bound that always holds. Unlike the dense case, enlarging $M$ does not reduce it, because the admitted columns of a row form an arbitrary subset.

\section{Implementation Details}
\label{app:implementation}

\subsection{Block Size and Coarse Cells}
\label{app:block-size}

The automatic policy chooses the block size from $B\in\{64,128,256\}$ by estimating the memory of the block summaries. Each summary holds a bounding box, a potential maximum and a log block mass, 32 bytes for $d=3$, so the summaries of both clouds take $32(N_B+M_B)$ bytes, with $N_B=\lceil n/B\rceil$ and $M_B=\lceil m/B\rceil$. The chosen $B$ keeps this estimate within $0.8C_{\mathrm{L2}}$, where $C_{\mathrm{L2}}$ is the physical L2 cache size, so the budget forces larger blocks on larger clouds.

For each $t\in\{0,\ldots,62\}$, we count the occupied prefixes of the sorted 63-bit Morton keys. Smaller $t$ gives a finer partition. We choose the smallest $t$ satisfying
\begin{equation}
\begin{gathered}
K_x\le2N_B,\qquad K_y\le2M_B,\qquad
112(K_x+K_y)\le0.8C_{\mathrm{L2}},\\
16\max(K_x,K_y,BN_B,BM_B)\le2^{31}-1.
\end{gathered}
\label{eq:automatic-coarse-selection}
\end{equation}
The first two conditions limit the coarse point count relative to the execution blocks. The third limits estimated storage for coarse arrays, and the fourth checks signed 32-bit offsets for coordinates padded to width 16. The fourth errs by one on the safe side, so it excludes $n=2^{27}$, whose largest offset $16n-1=2^{31}-1$ still fits. Cell counts come from the sorted keys, without solving a coarse problem for each $t$. The budget constants are empirical. They make the cells and blocks as fine as memory allows, which raises memory traffic.

\subsection{Forward Solver and Stopping Rules}
\label{app:stopping-rules}

\textbf{Sampled check.} In exact arithmetic $(\bP\ones)_i/a_i=\exp((f_i-f_i^+)/\eps)$, where $f_i^+$ is the unrestricted row update, so for potentials fixed independently of the sample, drawing $i$ with probability $a_i$ gives
\begin{equation}
\mathbb E_{i\sim\ba}\left|\exp((f_i-f_i^+)/\eps)-1\right|
=\sum_ia_i\left|(\bP\ones)_i/a_i-1\right|=\|\bP\ones-\ba\|_1,
\label{eq:coarse-screen}
\end{equation}
and columns drawn with $j\sim\bb$ estimate $\|\bP^\top\ones-\bb\|_1$ in the same way. The check draws 1024 indices per side with seed 0, reduces each sampled marginal over all opposite points, and bounds each residual by its sample mean plus three standard errors. It passes when the larger bound is below $\tau$ and fails when a mean minus three standard errors exceeds $\tau$. Otherwise it evaluates 8192 further indices and passes only if their bound is below $\tau$. No sample size gives a distribution-free guarantee on a mean \citep{bahadur1956nonexistence}. With uniform weights, $S$ draws miss a given row with probability $(1-1/n)^S$, and the solver draws the indices once per solve and reuses them at every check, so the potentials at the stopping check depend on the sample, whereas \Cref{eq:coarse-screen} assumes they do not. The check can therefore accept potentials whose full residual exceeds $\tau$, and every reported residual comes from a full evaluation, except at two intermediate Quijote sizes. \Cref{app:certificate} measures how often the check accepts such potentials on the grid.

\textbf{Coarse stage.} The coarse solve anneals on the Morton-prefix cells of \Cref{sec:method} with GeomLoss's $\eps$-scaling schedule, from the diagonal of the joint bounding box down to the blur $\sqrt\eps$ with scaling 0.95 and averaged updates, and lifts when it first reaches the requested $\eps$. If the sampled check rejects the lift, the solver completes the 64-entry target-$\eps$ tail, lifts and checks again, and then continues in batches of 64 coarse updates. After each batch it evaluates, from the shifted coarse potentials before and after the last averaged update,
\begin{equation}
D_c=\frac{2}{\eps}\max\Big\{\sum_I a_I\big|\hat f^{c+}_I-\hat f^c_I\big|,\
\sum_J b_J\big|\hat g^{c+}_J-\hat g^c_J\big|\Big\},
\label{eq:dc}
\end{equation}
accumulated in FP64. In exact arithmetic $D_c$ is the larger of the two mass-weighted coarse log-marginal errors before that update. These are means, whereas $\lambda_c$ in \Cref{prop:coarse-lift-residual} is a maximum over target cells, so the stop does not bound the first term of \Cref{eq:coarse-lift-bound}. Extra coarse work stops after two consecutive batches with $D_c\le\tau/2$, three consecutive relative improvements below 1\%, or 2048 extra updates. A non-finite $D_c$ discards the extra batches. The refined state is lifted and checked.

\textbf{Fine stage.} The lifted pair initializes the fine updates without another annealing schedule. The two symmetric half-steps start from the same incoming potentials, so one mask, rebuilt before every fine update, serves both. The first two ordinary updates are checked one by one. Afterwards the check interval is the smallest of 1, 2, 4, 8 and 16 updates whose combined time is at least ten times that of a check, and 1 once the 1024-index bound is at most $2\tau$, with further checks at every 16th update and at the cap. A terminal preview is a complete fine update with its own mask that forms unaveraged potentials from the same incoming pair. It runs after the re-lift that ends the coarse continuation, after the second ordinary update, at every 16th ordinary update once the 1024-index bound is at most $2\tau$, and at the cap. A rejected preview does not enter the ordinary trajectory, and the fine-update counts of \Cref{sec:forward} include previews.

\textbf{Return.} The solver returns as soon as a sampled check passes. At the fine-update cap, 128 ordinary updates unless stated otherwise, it returns the checked potentials with the best finite sample score and reports the tolerance as not confirmed. If no score is finite, it returns the retained lift.

\subsection{Mask Construction and Sparse Half-Steps}
\label{app:mask-construction}

\Cref{alg:fs2_mask} builds the candidates of the box screen \Cref{eq:boxtest} in CSR form, and verification then evaluates $m_{IJ}$ on each candidate tile. \Cref{alg:fs2_lse} gives the sparse $\hat\mf$ half-step over the admitted blocks, and the $\hat\mg$ half-step runs the same kernel on the transposed index.

\begin{algorithm}[t]
\caption{\textsc{BuildMask}: \Cref{eq:boxtest} as two Triton passes over block rows, one counting and one writing. Both scan the full $N_B\times M_B$ grid with the predicate in registers. Each tile test costs about $6d+3$ operations, about 0.010 per point pair at $d=3$ and $B=64$.}
\label{alg:fs2_mask}
\begin{algorithmic}[1]
\REQUIRE boxes $[\bx^I_{\lo},\bx^I_{\hi}]$, $[\by^J_{\lo},\by^J_{\hi}]$, block maxima $\bmx{f}, \bmx{g}$, log masses $\log a, \log b$, $\eps$, $\delta_{\mathrm{rel}}$
\ENSURE CSR arrays $\mathrm{crow}$, $\mathrm{col}$
\FOR{each block row $I$ \textbf{in parallel}}
    \STATE $c_I \leftarrow 0$ \hfill \textit{// pass 1: count}
    \FOR{each tile $\cJ$ of column blocks}
        \STATE $W \leftarrow \max\!\big(\bx^I_{\lo}-\by^{\cJ}_{\hi},\; \by^{\cJ}_{\lo}-\bx^I_{\hi},\; 0\big)$ \hfill \textit{// per-axis box separation}
        \STATE $d^2_{\min} \leftarrow \mathrm{rowsum}(W \odot W)$ \hfill \textit{// lower-bounds every pair in the tile}
        \STATE $\mathrm{keep} \leftarrow \big[\, d^2_{\min} < \bmx{f}_I + \bmx{g}_{\cJ} - \eps\,\rho_{I\cJ} \,\big]$ \hfill \textit{// \Cref{eq:boxtest}, $\log a_I$ is hoisted}
        \STATE $c_I \leftarrow c_I + \mathrm{sum}(\mathrm{keep})$
    \ENDFOR
    \STATE write $c_I$ to HBM
\ENDFOR
\STATE $\mathrm{crow} \leftarrow \mathrm{ExclusiveScan}(c)$ \hfill \textit{// offsets exist before any row writes}
\STATE \textbf{if} $\mathrm{crow}[N_B]$ exceeds the index type \textbf{then raise} \hfill \textit{// refuse rather than wrap silently}
\FOR{each block row $I$ \textbf{in parallel}}
    \STATE $\mathrm{base}\leftarrow \mathrm{crow}[I]$, \ $w\leftarrow 0$ \hfill \textit{// pass 2: fill}
    \FOR{each tile $\cJ$ of column blocks}
        \STATE recompute $\mathrm{keep}$ in registers \hfill \textit{// cheaper than storing a dense Boolean mask}
        \STATE $\mathrm{pos} \leftarrow \mathrm{ExclusiveScan}(\mathrm{keep})$ \hfill \textit{// slot inside this tile}
        \STATE $\mathrm{col}[\mathrm{base}+w+\mathrm{pos}] \leftarrow \cJ$ \ \textbf{where} $\mathrm{keep}$ \hfill \textit{// one program per row, so ascending}
        \STATE $w \leftarrow w + \mathrm{sum}(\mathrm{keep})$
    \ENDFOR
\ENDFOR
\end{algorithmic}
\end{algorithm}

\begin{algorithm}[t]
\caption{\textsc{SparseLSE}: the streaming $\hat\mf$-update over the admitted blocks. The online max/sum-exp recurrence is FlashSinkhorn's, run here over a CSR row rather than over every column block. The cost is never formed either way.}
\label{alg:fs2_lse}
\begin{algorithmic}[1]
\REQUIRE $\bX\in\R^{n\times d}$, $\bY\in\R^{m\times d}$, $\hat\mg\in\R^m$, $\bb\in\Delta^m$, CSR $(\mathrm{crow},\mathrm{col})$, $\eps$
\ENSURE $\hat\mf\in\R^n$
\STATE $\bdelta \leftarrow \eps\log\bb$ \hfill \textit{// precompute the weight bias}
\FOR{each block row $I$ of size $B$}
    \STATE load $\bX_I$ to SRAM, \ $\mm_I \leftarrow -\infty$, \ $\bs_I \leftarrow 0$ \hfill \textit{// running max and sum-exp}
    \STATE $r_0 \leftarrow \mathrm{crow}[I]$, \ $\mathrm{nnz}_I \leftarrow \mathrm{crow}[I{+}1]-r_0$
    \FOR{$t = 0,\dots,\mathrm{nnz}_I-1$}
        \STATE $J \leftarrow \mathrm{col}[r_0+t]$, load $\bY_J, \hat\mg_J, \bdelta_J$ to SRAM \hfill \textit{// admitted blocks only}
        \STATE $\mathbf{Z} \leftarrow (2\,\bX_I \bY_J^\top + \hat\mg_J + \bdelta_J)/\eps$ \hfill \textit{// one kernel tile, the cost is never formed}
        \STATE $\mm_{\mathrm{new}} \leftarrow \max(\mm_I,\ \mathrm{rowmax}(\mathbf{Z}))$
        \STATE $\bs_I \leftarrow e^{\mm_I-\mm_{\mathrm{new}}}\odot \bs_I + \mathrm{rowsum}\!\big(e^{\mathbf{Z}-\mm_{\mathrm{new}}}\big)$, \ $\mm_I \leftarrow \mm_{\mathrm{new}}$
    \ENDFOR
    \STATE $\hat\mf_I \leftarrow -\eps\big(\mm_I + \log \bs_I\big)$, write to HBM \hfill \textit{// $-\eps\,\LSE_{\rm row}$ over the admitted blocks}
\ENDFOR
\end{algorithmic}
\end{algorithm}

All forward solves form the dot products on TF32 tensor cores with FP32 accumulation, keep the potentials in FP32, and add the bias $(\hat g_j+\eps\log b_j)/\eps$ in FP32 after the dot product. Folding the bias into the dot as an extra coordinate would round an $\cO(1/\eps)$ operand to TF32, an absolute logit error of about 5 at $\eps=10^{-4}$.

\subsection{Operation Count}
\label{app:operation-count}

Let $S_c$ count all coarse updates, including annealing and additional iterations at the requested $\eps$, and let $L_{\mathrm{lift}}$ count paired lifts. Each contributes respectively $\cO(K_xK_yd)$ and $\cO((nK_y+mK_x)d)$ operations. For sampled checks, define $Q_{\mathrm{check}}=\sum_\ell(mS_{x,\ell}+nS_{y,\ell})$, where $S_{x,\ell},S_{y,\ell}$ are the actual sample sizes in check $\ell$. Both stages of an enlarged check contribute to this sum.

For fine update $k$, let $\mathcal C_k$ be the bounding-box candidates and $\mathcal A_k$ the verified tiles. Computing potential maxima and scanning block pairs costs $\cO(n+m+N_BM_Bd)$. Verification and the two sparse half-steps cost $\cO(B^2d(|\mathcal C_k|+2|\mathcal A_k|))$. Writing and transposing the indices adds $\cO(N_B+M_B+E_k\log(2+E_k))$ as a conservative bound including sorting, where $E_k=|\mathcal C_k|+|\mathcal A_k|$. Let $Q_{\mathrm{dense}}$ count point pairs evaluated by dense updates and empty-row repairs. The complete arithmetic and index work is bounded by
\begin{equation}
\begin{aligned}
\cO\Big(& (n+m)\log(n+m)+(n+m)d\\
&+d\{S_cK_xK_y+L_{\mathrm{lift}}(nK_y+mK_x)
       +Q_{\mathrm{check}}+Q_{\mathrm{dense}}\}\\
&+\sum_k\{n+m+N_BM_Bd+B^2d(|\mathcal C_k|+2|\mathcal A_k|)
             +E_k\log(2+E_k)\}\Big).
\end{aligned}
\label{eq:forward-operation-count}
\end{equation}
The sum runs over ordinary updates and terminal previews, rejected ones included, and each dense update adds its $nm$ evaluations to $Q_{\mathrm{dense}}$. No score matrix is stored. Beyond the point and coarse arrays, memory holds the candidate and admitted CSR indices, and an index with $z$ nonzeros is allocated only if it fits signed 32-bit offsets and $24z+8(N_B+M_B+2)\le0.8M_{\mathrm{free}}$ bytes, with $M_{\mathrm{free}}$ the free GPU memory. The selector's limits give $K_xK_y\le4N_BM_B$ and $nK_y+mK_x\le2(nM_B+mN_B)$. Both remain quadratic in $n$ at fixed $B$.

\FloatBarrier
\section{Synthetic Benchmarks}
\label{app:benchmarks}

After the setup and the GeomLoss baseline, the subsections follow the paragraphs of \Cref{sec:forward} in order, each with its own protocol.

\subsection{Setup}

Unless stated otherwise, every run uses one A100 with PyTorch 2.8.0+cu128, Triton 3.4.0, GeomLoss 0.3.1 and PyKeOps 2.3. The kernel profiles of \Cref{app:profiling} use a separate A100. Inputs are three-dimensional with uniform weights and seed zero, and all methods receive the same centered, TF32-rounded coordinates and the full squared-Euclidean cost, without a factor of $1/2$, so GeomLoss's default blur of 0.05 is the grid point $\eps=0.005$. Times are warmed complete calls, from preparation and parameter selection to return, and exclude input loading, compilation and validation. Grid endpoints are single calls, and comparisons and ablations report medians of three. Accuracy is the full residual $R$ of the returned potentials (\Cref{sec:experiments}), evaluated on all points, against $\tau=0.005$. Peak memory is the largest PyTorch allocation, inputs included, in a separate warmed call.

\textbf{Methods compared.}
\begin{itemize}
\item \emph{FS2}: the automatic policy of \Cref{app:implementation}, which chooses $B=128$ over 64 on every problem and, for $n=2^{19},\dots,2^{23}$, the cell levels $t=46$, 45, 44, 43 and 42 on clustered and 43, 42, 41, 40 and 39 on heterogeneous clouds.
\item \emph{GeomLoss multiscale}: its native multiscale solver with scaling 0.9, default cluster scale and truncation 5, with KeOps for the fine reductions.
\item \emph{FlashSinkhorn}: its native dense solver with scaling 0.9.
\end{itemize}
Neither baseline gets added refinement or a residual-based stopping rule.

\textbf{Data.} A clustered cloud draws 32 centers uniformly in $[0,1]^3$ and places each point at a uniformly chosen center plus isotropic Gaussian noise of standard deviation 0.02. Source and target draw their centers independently. A heterogeneous cloud mixes three components: 80\% of the points in a compact Gaussian at $(-0.35,-0.15,0)$ with standard deviations $(0.04,0.025,0.02)$, 19\% in a thin needle at $(0.35,0.10,0)$ with standard deviations $(0.20,0.008,0.008)$, rotated by 0.6\,rad about the $z$-axis, and 1\% uniform background in $[-0.9,0.9]^3$. The target shifts the two Gaussian centers by $(0.025,-0.01,0.008)$ and $(-0.018,0.018,-0.006)$ and is drawn independently.

\subsection{GeomLoss Multiscale Workflow}
\label{app:geomloss-workflow}

The evaluated GeomLoss 0.3.1 point-cloud backend \citep{feydy2019fast,feydy2020geometric} uses two resolutions, the weighted centroids of cells of side $\ell_{\mathrm{cell}}$ and the original points. It follows a decreasing schedule $\eps_1\ge\cdots\ge\eps_{N_\eps}$ on the centroids, switches to the points just before the first level whose blur falls below $\ell_{\mathrm{cell}}$, and lifts the potentials there by an LSE against the coarse records. If the schedule never reaches such a blur, or reaches it only at its last level, GeomLoss lifts its final coarse potentials and takes no fine update. Otherwise it builds the mask of \Cref{sec:family} once, at the last coarse level $p^\star$,
\begin{equation}
\cA^{\mathrm{GL}}=
\{(I,J):C(x^c_I,y^c_J)<f^c_I+g^c_J+5\eps_{p^\star}\},
\label{eq:geomloss}
\end{equation}
and reuses the resulting KeOps index ranges \citep{charlier2021kernel} for every fine update.

\textbf{Omitted mass.} One FP64 pass over all pairs of the two problems of \Cref{tab:mask-projection} measures the plan mass each mask omits. At FS2's returned potentials on clustered data, the GeomLoss mask, built at its switch to the points, omits more than half the plan mass of 964 of the $2^{20}$ rows and nearly all of it in 12, while FS2's mask omits at most $6.7\times10^{-6}$ of any row and keeps half as many point pairs. This difference includes the stage at which GeomLoss builds its mask, not only its criterion. At GeomLoss's own returned potentials its mask omits $9.8\times10^{-6}$ of the mass, 0.02\% of its residual of 0.054, so that residual is not mainly omitted mass. On heterogeneous data GeomLoss's mask omits at most $7\times10^{-18}$ of any row, and FS2's at most $2.9\times10^{-5}$.

\FloatBarrier
\subsection{Speed}
\label{app:speed}

The grid covers clustered and heterogeneous clouds with a size sweep over $n=m\in\{2^{19},\dots,2^{23}\}$ at $\eps\in\{0.001,0.01\}$ and a regularization sweep over $\eps\in\{0.001,0.002,0.005,0.01,0.02\}$ at $n=m\in\{2^{21},2^{22}\}$. Eight problems are shared, which leaves 32 inputs (\Cref{tab:grid-endpoints}). \Cref{fig:benchmarks}(a--d) shows the size sweep and the $n=2^{22}$ regularization sweep, 26 problems. FS2 runs its automatic policy (\Cref{app:block-size,app:stopping-rules}) with one fixed kernel calibration and no per-problem tuning.

\begin{table}[!htb]
\centering\small
\setlength{\tabcolsep}{5pt}
\caption{The 32-problem forward grid. A dagger marks $R>0.005$. FS2 meets the target on all 32 problems and GeomLoss on 10, all heterogeneous. Each entry is one call. \Cref{tab:clustered-solve}a and \Cref{tab:geomloss-sweep} remeasure clustered $2^{21}$, $\eps=0.001$ as medians of three calls.}
\label{tab:grid-endpoints}
\begin{tabular}{rrrrrr}
\toprule
$n=m$ & $\eps$ & FS2 time (s) & FS2 $R$ & GL time (s) & GL $R$ \\
\midrule
\multicolumn{6}{l}{\textit{Clustered}} \\
$2^{19}$ & 0.001 & 5.638 & 0.004879 & 1.521 & 0.054122$^{\dagger}$ \\
$2^{19}$ & 0.01 & 0.290 & 0.002918 & 0.026 & 0.072730$^{\dagger}$ \\
$2^{20}$ & 0.001 & 4.176 & 0.004791 & 5.500 & 0.053802$^{\dagger}$ \\
$2^{20}$ & 0.01 & 0.275 & 0.003245 & 0.027 & 0.071918$^{\dagger}$ \\
$2^{21}$ & 0.001 & 15.593 & 0.004828 & 21.448 & 0.054161$^{\dagger}$ \\
$2^{21}$ & 0.002 & 5.380 & 0.004094 & 14.305 & 0.055280$^{\dagger}$ \\
$2^{21}$ & 0.005 & 2.318 & 0.003198 & 7.167 & 0.056265$^{\dagger}$ \\
$2^{21}$ & 0.01 & 0.812 & 0.002673 & 0.031 & 0.073064$^{\dagger}$ \\
$2^{21}$ & 0.02 & 0.646 & 0.001066 & 0.029 & 0.046715$^{\dagger}$ \\
$2^{22}$ & 0.001 & 20.289 & 0.004639 & 85.484 & 0.053612$^{\dagger}$ \\
$2^{22}$ & 0.002 & 9.843 & 0.004871 & 56.868 & 0.054748$^{\dagger}$ \\
$2^{22}$ & 0.005 & 3.676 & 0.004129 & 28.455 & 0.055743$^{\dagger}$ \\
$2^{22}$ & 0.01 & 2.825 & 0.001989 & 0.037 & 0.070512$^{\dagger}$ \\
$2^{22}$ & 0.02 & 2.248 & 0.000810 & 0.037 & 0.046228$^{\dagger}$ \\
$2^{23}$ & 0.001 & 45.405 & 0.004694 & 346.051 & 0.054192$^{\dagger}$ \\
$2^{23}$ & 0.01 & 9.914 & 0.001276 & 0.052 & 0.071911$^{\dagger}$ \\
\midrule
\multicolumn{6}{l}{\textit{Heterogeneous}} \\
$2^{19}$ & 0.001 & 0.929 & 0.003605 & 10.547 & 0.012514$^{\dagger}$ \\
$2^{19}$ & 0.01 & 0.045 & 0.001944 & 3.314 & 0.002151 \\
$2^{20}$ & 0.001 & 1.824 & 0.004342 & 39.628 & 0.013687$^{\dagger}$ \\
$2^{20}$ & 0.01 & 0.106 & 0.000970 & 14.886 & 0.000886 \\
$2^{21}$ & 0.001 & 7.060 & 0.003875 & 154.122 & 0.013793$^{\dagger}$ \\
$2^{21}$ & 0.002 & 0.302 & 0.001911 & 125.213 & 0.007890$^{\dagger}$ \\
$2^{21}$ & 0.005 & 0.285 & 0.000962 & 86.684 & 0.002550 \\
$2^{21}$ & 0.01 & 0.272 & 0.000949 & 57.820 & 0.000841 \\
$2^{21}$ & 0.02 & 0.261 & 0.000548 & 19.300 & 0.001679 \\
$2^{22}$ & 0.001 & 1.096 & 0.003059 & 612.415 & 0.013734$^{\dagger}$ \\
$2^{22}$ & 0.002 & 1.053 & 0.001056 & 497.391 & 0.007850$^{\dagger}$ \\
$2^{22}$ & 0.005 & 0.992 & 0.000644 & 344.334 & 0.002531 \\
$2^{22}$ & 0.01 & 0.947 & 0.000880 & 229.573 & 0.000846 \\
$2^{22}$ & 0.02 & 0.908 & 0.000531 & 76.566 & 0.001682 \\
$2^{23}$ & 0.001 & 4.026 & 0.001624 & 2595.417 & 0.003906 \\
$2^{23}$ & 0.01 & 3.484 & 0.000885 & 915.975 & 0.000883 \\
\bottomrule
\end{tabular}

\end{table}

\textbf{Trajectories.} In \Cref{fig:benchmarks}(f), FS2's potentials are lifted every 64 coarse updates and taken after every fine update, and each is placed at the time in the solve at which it is available, its own lift included.

\begin{table}[!htb]
\centering\small
\setlength{\tabcolsep}{4pt}
\caption{GeomLoss settings on two $n{=}m{=}2^{21}$ problems, truncation 5, rerun with FS2 in the same session. Bold marks solves that meet $\tau{=}0.005$. Default is GeomLoss's cluster scale (430 and 2033 source cells), and matched gives as many source cells as FS2's coarse stage, to within one (23,799 against 23,800, and 28,332). FS2 and matched times are medians of three calls, the others single calls. $R$ is the full two-sided residual.}
\label{tab:geomloss-sweep}
\begin{tabular}{@{}llrrrr@{}}
\toprule
 & & \multicolumn{2}{c}{Clustered, $\eps{=}0.001$} & \multicolumn{2}{c}{Heterogeneous, $\eps{=}0.01$} \\
\cmidrule(lr){3-4}\cmidrule(l){5-6}
Solver & Cells & Time (s) & $R$ & Time (s) & $R$ \\
\midrule
FS2 & auto & 15.37 & \textbf{.00483} & 0.273 & \textbf{.00095} \\
\addlinespace
GL, scaling .90 & default & 21.46 & .0542 & 57.80 & \textbf{.00084} \\
GL, scaling .95 & default & 39.48 & .0310 & 86.59 & \textbf{.00094} \\
GL, scaling .99 & default & 200.38 & .00687 & 404.03 & \textbf{.00020} \\
\addlinespace
GL, scaling .90 & matched & 0.361 & .0597 & 0.391 & \textbf{.00076} \\
GL, scaling .95 & matched & 0.548 & .0373 & 0.567 & \textbf{.00083} \\
GL, scaling .99 & matched & 2.023 & .0171 & 2.012 & \textbf{.00036} \\
\bottomrule
\end{tabular}

\end{table}

\textbf{GeomLoss settings.} The grid runs GeomLoss at scaling 0.9 with its default cluster scale and truncation 5. \Cref{tab:geomloss-sweep} varies the scaling and the cluster scale on two $n=m=2^{21}$ problems. The matched cluster scale gives GeomLoss FS2's cell count, taken from FS2's record and not charged to GeomLoss, and at this scale GeomLoss's schedule stays on the centroids and ends with a lift. On clustered data at $\eps=0.001$ none of the six settings meets $\tau$. The best, scaling 0.99 at the default cluster scale, returns 0.0069 after 200\,s, 13 times FS2's time. FS2's own coarse stage also misses $\tau$ there, so the target needs fine updates, and GeomLoss's fine stage in these settings does not reach it. On heterogeneous data at $\eps=0.01$ all six settings meet $\tau$. At the default cluster scale GeomLoss takes 57.8\,s, 212 times FS2, while with FS2's cell count its coarse exit takes 0.39\,s, 1.4 times FS2. The two scaling-0.9 runs differ only in the cluster scale, so on this problem the gap under the default setting comes from the cluster scale.

\begin{table}[!htb]
\centering\small
\setlength{\tabcolsep}{4pt}
\caption{GeomLoss continued at the target $\eps$ on clustered $n{=}m{=}2^{21}$, truncation 5, on the GPU of \Cref{tab:geomloss-sweep}. $K^*$ is the smallest number $K$ of appended target-$\eps$ entries whose full residual meets $\tau{=}0.005$. Time is the median of three clean calls at $K^*$, which at $\eps{=}0.001$ are the GL$^{+}$ calls of \Cref{tab:clustered-solve}a. Bold marks solves that meet $\tau$. Without a $K^*$, the last column gives the lowest scored residual. FS2 takes 15.44\,s with $R{=}0.00483$ at $\eps{=}0.001$ (\Cref{tab:clustered-solve}a) and, in the run of this table, 0.817\,s with $R{=}0.00267$ at $\eps{=}0.01$.}
\label{tab:geomloss-continued}
\begin{tabular}{@{}llrrrr@{}}
\toprule
 & & $K{=}0$ & \multicolumn{3}{c}{Continued at the target $\eps$} \\
\cmidrule(lr){3-3}\cmidrule(l){4-6}
Problem & GL setting & $R$ & $K^*$ & Time (s) & $R$ \\
\midrule
$\eps{=}0.001$ & .90, default & .0542 & -- & -- & .0203 at $K{=}512$ \\
 & .99, default & .00687 & 118 & 421.10 & \textbf{.004997} \\
 & .90, matched & .0597 & -- & -- & .0160 at $K{=}2048$ \\
\addlinespace
$\eps{=}0.01$ & .90, default & .0731 & -- & -- & .0372 at $K{=}512$ \\
 & .99, default & .0373 & -- & -- & .0372 at $K{=}512$ \\
 & .90, matched & .0591 & 183 & 1.13 & \textbf{.004976} \\
\bottomrule
\end{tabular}

\end{table}

\textbf{GeomLoss continued at the target $\eps$.} The grid gives GeomLoss no residual-based stopping rule, so here we give it one that favours it. We append $K$ copies of the target $\eps$ to its native schedule and stop at $K^*$, found by scoring $K=0,1,2,4,\dots,2048$ and bisecting the first bracket that meets $\tau$. This oracle stop charges GeomLoss for no residual test, whereas FS2's times include its sampled checks. On clustered $\eps=0.001$, scaling 0.99 at the default cluster scale reaches $\tau$ after 118 extra entries in 421\,s, 27.3 times FS2's 15.4\,s (\Cref{tab:geomloss-continued}), and the grid's setting reaches it at none of the scored $K$. At $\eps=0.01$ the matched setting reaches $\tau$ in 1.13\,s, 1.38 times FS2, while both default-cell settings level off at 0.037. Both results are specific to these two problems.

\subsection{Residual Certificate and Sampled Check}
\label{app:certificate}

\begin{table}[!htb]
\centering\small
\setlength{\tabcolsep}{4pt}
\caption{The certificate $R_{\cA}+\Omega$ of \Cref{sec:rule} at returned potentials, with $R_{\cA}$ the larger of the row and column residuals of $\bP^{\cA}$ in IEEE FP32 and $\Omega$ the tile-sum bound on the omitted mass in FP64. Where the mask would admit more than half of the tiles (dense), the update is dense, $\Omega=0$ and $R_{\cA}$ is the full residual. $R$ is the recorded full residual, and bold certifies $R\le\tau$.}
\label{tab:certificate}
\begin{tabular}{@{}lrrrlrrrr@{}}
\toprule
Problem & $n$ & $\eps$ & $\tau$ & Mask & $R$ & $R_{\cA}$ & $\Omega$ & $R_{\cA}+\Omega$ \\
\midrule
Clustered & $2^{19}$ & 0.001 & 0.005 & sparse & 0.004879 & 0.004879 & $4.7{\times}10^{-6}$ & \textbf{0.004884} \\
Clustered & $2^{19}$ & 0.01 & 0.005 & sparse & 0.002918 & 0.002918 & $2.7{\times}10^{-5}$ & \textbf{0.002945} \\
Clustered & $2^{20}$ & 0.001 & 0.005 & sparse & 0.004791 & 0.004791 & $4.7{\times}10^{-6}$ & \textbf{0.004796} \\
Clustered & $2^{20}$ & 0.01 & 0.005 & sparse & 0.003245 & 0.003246 & $2.5{\times}10^{-5}$ & \textbf{0.003271} \\
Clustered & $2^{21}$ & 0.001 & 0.005 & sparse & 0.004828 & 0.004828 & $4.8{\times}10^{-6}$ & \textbf{0.004833} \\
Clustered & $2^{21}$ & 0.002 & 0.005 & sparse & 0.004094 & 0.004094 & $7.5{\times}10^{-6}$ & \textbf{0.004101} \\
Clustered & $2^{21}$ & 0.005 & 0.005 & sparse & 0.003198 & 0.003198 & $1.4{\times}10^{-5}$ & \textbf{0.003211} \\
Clustered & $2^{21}$ & 0.01 & 0.005 & sparse & 0.002673 & 0.002674 & $2.3{\times}10^{-5}$ & \textbf{0.002697} \\
Clustered & $2^{21}$ & 0.02 & 0.005 & sparse & 0.001066 & 0.001067 & $4.9{\times}10^{-5}$ & \textbf{0.001115} \\
Clustered & $2^{22}$ & 0.001 & 0.005 & sparse & 0.004639 & 0.004639 & $4.8{\times}10^{-6}$ & \textbf{0.004644} \\
\addlinespace
Heterog. & $2^{19}$ & 0.001 & 0.005 & dense & 0.003605 & 0.003605 & 0 & \textbf{0.003605} \\
Heterog. & $2^{20}$ & 0.001 & 0.005 & dense & 0.004342 & 0.004342 & 0 & \textbf{0.004342} \\
Heterog. & $2^{21}$ & 0.001 & 0.005 & dense & 0.003875 & 0.003875 & 0 & \textbf{0.003875} \\
\addlinespace
Quijote & $2^{27}$ & 3.81 & 0.01 & sparse & 0.009154 & 0.009108 & $1.1{\times}10^{-8}$ & \textbf{0.009108} \\
Quijote & $2^{27}$ & 3.81 & 0.05 & sparse & 0.048965 & 0.048953 & $5.4{\times}10^{-8}$ & \textbf{0.048953} \\
\bottomrule
\end{tabular}

\end{table}

\textbf{Certificate.} The solver as run stops on its sampled check, and the certificate is computed afterwards. At each evaluated return we build the mask by the production rule at the returned pair and compute $R_{\cA}$ with the solver's sparse kernel. We evaluated the clustered returns up to $n=2^{22}$ at $\eps=0.001$ and up to $2^{21}$ at the other regularizations, the heterogeneous returns at $\eps=0.001$ up to $2^{21}$, and both $2^{27}$ Quijote solutions, but not the other grid returns. All 15 are certified (\Cref{tab:certificate}). On the grid, $\Omega$ is at most 1\% of $\tau$ and $R_{\cA}$ agrees with the recorded full residual to within $1.2\times10^{-6}$, at the cost of one mask build and one sparse update. The heterogeneous returns take the dense update, so their certificate is the full residual. At $2^{27}$, $\Omega$ is below $10^{-7}$, so FP32 arithmetic rather than the mask sets the certificate, which gives 0.009108 at $\tau=0.01$ against a recorded full residual of 0.009154. Both are FP32 estimates, and the FP64 correction of \Cref{app:quijote-precision} gives .00882 and .00869 for rows and columns. The certificate takes minutes on one A100, where the full evaluation takes some 38 hours.

\textbf{Sampled check.} Of the 40 reported returns with a full residual, none exceeds $\tau$, and the largest is $0.98\tau$. We re-solved the seven grid problems closest to $\tau$, clustered $\eps=0.001$ at $n=2^{19}$, $2^{20}$ and $2^{21}$, clustered $2^{21}$ at $\eps=0.002$, and heterogeneous $\eps=0.001$ at $2^{19}$, $2^{20}$ and $2^{21}$, with the check's index seeds set to $(s,159015+s)$ for $s=1,\dots,20$. All 140 solves returned confirmed pairs, and none has a full residual above $\tau$, which bounds the rate of false acceptance at 2.6\% (Clopper--Pearson, 95\%), or at 6.0\% over the 60 solves on the three problems whose stopping update moved with the seed. At the accepting check, $(R-\text{mean})/\mathrm{SE}$ over the 280 sides has mean 0.03 and standard deviation 1.01, so reusing one sample across checks leaves no detectable bias at the stop.

\textbf{Concentrated error.} On the accepted pair of heterogeneous $2^{20}$ at $\eps=0.01$ ($R=0.19\tau$), we shift $f$ on a random 0.01\%, 0.1\% or 1\% of the rows so that the row residual becomes $1.2\tau$ or $2\tau$, and run the check with 1000 index seeds per setting. The row side alone accepts 95.8\%, 38.0--38.9\% and 0.1--0.6\% of the seeds, close to the rates predicted from the exact per-row values. The production check also tests the columns, whose residual the shift raises to 1.15--1.96$\tau$, and accepted none of the 6000 perturbed states. These measurements do not extend to $n\ge2^{22}$, to other $\eps$, to non-uniform weights or to the Quijote blur.

\subsection{Ablations}
\label{app:ablations}

The ablations use clustered and heterogeneous clouds with $n=m=2^{20}$ at $\eps\in\{0.001,0.01\}$, and pair each control with an FS2 reference measured in the same run on the same GPU. Each of three controls removes one component. \emph{No coarse stage} anneals on the points instead of the centroids, with no lift and no coarse continuation. \emph{Dense fine} keeps the coarse stage and replaces every fine update by a dense streaming update. \emph{Box only} keeps the box screen and sparse execution but skips tile verification.

\begin{table}[!htb]
\centering\footnotesize
\setlength{\tabcolsep}{3pt}
\caption{Component ablations. C and H denote clustered and heterogeneous inputs, followed by $\eps$. The ratio (control/FS2) is given for active controls that meet the target.}
\label{tab:ablations}
\begin{tabular}{llrrrrl}
\toprule
Case & Control & FS2 (s) & Control (s) & Full $R$ & Ratio & State \\
\midrule
C, 0.001 & No coarse stage & 4.165 & 153.330 & 0.02135 & -- & misses target \\
 & Dense fine & 4.165 & 33.327 & 0.00479 & 8.001 & qualified \\
 & Box only & 4.167 & 4.335 & 0.00479 & 1.040 & qualified \\
\addlinespace
C, 0.01 & No coarse stage & 0.275 & 183.132 & 0.00477 & 664.818 & qualified \\
 & Dense fine & 0.276 & 0.275 & 0.00325 & -- & inactive \\
 & Box only & 0.276 & 0.275 & 0.00325 & -- & inactive \\
\addlinespace
H, 0.001 & No coarse stage & 1.825 & 281.133 & 0.00039 & 154.023 & qualified \\
 & Dense fine & 1.824 & 2.423 & 0.00434 & 1.328 & qualified \\
 & Box only & 1.824 & 1.576 & 0.00434 & 0.864 & qualified \\
\addlinespace
H, 0.01 & No coarse stage & 0.089 & 145.599 & 0.00318 & 1643.418 & qualified \\
 & Dense fine & 0.089 & 0.089 & 0.00097 & -- & inactive \\
 & Box only & 0.089 & 0.107 & 0.00097 & -- & inactive \\
\bottomrule
\end{tabular}

\end{table}

\textbf{Coarse stage.} Removing the coarse stage slows the solve by 154--1643$\times$ on the three problems where both arms meet the target (\Cref{tab:ablations}). On clustered $\eps=0.001$ the control stops at the 128-update fine cap with $R=0.02135$. A shorter coarse continuation has the same effect on a smaller scale (\Cref{tab:coarse-controls}a): after 64 or 256 extra coarse updates the solve misses $\tau$ within the fine cap, and with the cap raised to 8192 it needs 1305 or 1115 fine updates, 51.6--60.0$\times$ the FS2 time.

\begin{table}[!htb]
\centering\footnotesize
\setlength{\tabcolsep}{4pt}
\caption{Coarse-continuation and threshold controls at $n=m=2^{20}$, $\eps=0.001$. Fine updates count ordinary updates plus terminal previews. (a) Clustered, with the number of extra coarse updates fixed. Ratios divide by the FS2 reference of the same run. (b) Every threshold control meets the target.}
\label{tab:coarse-controls}
\begin{tabular}{@{}rrrrrr@{}}
\toprule
\multicolumn{6}{@{}l}{\textbf{(a)} Extra coarse updates, clustered} \\
Extra coarse & Fine cap & Fine updates & Time (s) & $R$ & Ratio \\
\midrule
2048 (native) & 128 & 13\,+\,2 & 4.169 & 0.00479 & 1 \\
64 & 128 & 128\,+\,3 & 24.355 & 0.01896 & -- \\
256 & 128 & 128\,+\,3 & 24.817 & 0.01421 & -- \\
64 & 8192 & 1267\,+\,38 & 250.235 & 0.00478 & 60.0 \\
256 & 8192 & 1077\,+\,38 & 215.674 & 0.00477 & 51.6 \\
\midrule
\multicolumn{6}{@{}l}{\textbf{(b)} $\delta_{\mathrm{rel}}$ scaled by a factor; time relative to the native threshold} \\
Factor & \multicolumn{2}{r}{Clustered} & \multicolumn{2}{r}{Heterogeneous} & \\
\midrule
$10^{-2}$ & \multicolumn{2}{r}{1.080} & \multicolumn{2}{r}{1.039} & \\
$10^{-1}$ & \multicolumn{2}{r}{1.068} & \multicolumn{2}{r}{1.027} & \\
$10^{1}$ & \multicolumn{2}{r}{0.965} & \multicolumn{2}{r}{0.945} & \\
$10^{2}$ & \multicolumn{2}{r}{0.908} & \multicolumn{2}{r}{1.530} & \\
\bottomrule
\end{tabular}

\end{table}

\textbf{Sparse fine execution.} On clustered $\eps=0.001$ both arms run 13 ordinary updates and two terminal previews, and sparse execution cuts the time from 33.327 to 4.165\,s ($8.00\times$). Heterogeneous $\eps=0.001$ takes one terminal preview over a mask that admits 60.6\% of the tiles and gains $1.33\times$. At $\eps=0.01$ neither geometry takes a fine update.

\textbf{Tile verification.} On clustered $\eps=0.001$, verification cuts the padded point-pair work from $3.049\times10^{12}$ to $2.087\times10^{12}$ and the time from 4.335 to 4.167\,s. On heterogeneous data it removes 3.95\% of the work and costs time, 1.824 against 1.576\,s. Scaling $\delta_{\mathrm{rel}}$ by $10^{-2}$ to $10^{2}$ changes the clustered time by at most 10\%, and the largest factor slows the heterogeneous solve by 53\% (\Cref{tab:coarse-controls}b).

\subsection{Profiling}
\label{app:profiling}

The profiles use the clustered $n=m=2^{21}$, $\eps=0.001$ problem of \Cref{tab:clustered-solve}. The stage times of \Cref{tab:clustered-solve}a come from one Nsight Systems call per solver. FS2's coarse stage is its coarse kernels (2.663\,s) and lifts (0.277\,s), and its fine stage its fine kernels (8.274\,s), mask construction (3.118\,s) and sampled checks (0.909\,s). GeomLoss's fine phase is 96 kernels averaging 0.22\,s, so kernel launches do not explain its time. The kernel counters come from NCU on a separate A100, where the unprofiled solve takes 16.23\,s with FS2 and 21.49\,s with GeomLoss.

\begin{table}[!htb]
\centering\footnotesize
\setlength{\tabcolsep}{4pt}
\caption{Kernel counters for one fine update per direction on clustered $n=m=2^{21}$, $\eps=0.001$, on the profiling A100. FS replays one dense reduction per direction from its native incoming state, with its counters averaged over the two directions. GL and FS2 run FS2's first ordinary fine update on the same $B=128$ mask, and $f/g$ gives the two directions. Per-pair counts divide totals by the point pairs they cover, instructions are executed warp instructions, and dashes mark counters not collected.}
\label{tab:kernel-counters}
\begin{tabular}{lrrr}
\toprule
Metric & FS (dense) & GL (KeOps, FP32) & FS2 (TF32) \\
\midrule
Point pairs per direction & $4.40\times10^{12}$ & $2.58\times10^{11}$ & $2.58\times10^{11}$ \\
NCU time per direction (ms) & 3{,}600 & 393.957 & 251.435 \\
Warm update, both directions (ms) & -- & 791.073 & 504.392 \\
Warp instructions / pair & 0.34 & 0.628315 & 0.278530 \\
Shared-load wavefronts / pair & -- & 0.0625001 & 0.0246587 \\
Shared-store wavefronts / pair & -- & 0.0009806 & 0.0053709 \\
HBM read bytes / pair & 0.0043542 & 0.0003161 & 0.0149489 \\
HBM write bytes / pair & 0.0000025 & 0.0001135 & 0.0000558 \\
Tensor-pipe instructions / pair & -- & 0.0003344 & 0.015625 \\
Tensor activity, $f/g$ (\%) & -- & 0.07 / 0.07 & 21.11 / 21.08 \\
Occupancy, $f/g$ (\%) & 37.08 & 91.4 / 91.14 & 12.48 / 12.47 \\
Math-pipe throttle, $f/g$ (\%) & 8.995 & 17.39 / 17.41 & 2.32 / 2.31 \\
Long scoreboard, $f/g$ (\%) & 1.21 & 0.51 / 0.51 & 6.05 / 6.03 \\
\bottomrule
\end{tabular}

\end{table}

\textbf{Kernel counters.} \Cref{tab:kernel-counters} compares the GL and FS2 executors on the same mask. We freeze FS2's first ordinary fine update on the clustered $n=m=2^{21}$, $\eps=0.001$ problem, with a $B=128$ mask of 15,760,133 admitted block pairs and $5.16\times10^{11}$ point pairs over both directions. FS2's TF32 kernels and GeomLoss's FP32 KeOps executor receive the same sorted points, incoming potentials, weights and transposed support, KeOps with each run of contiguous target blocks as one range. Over 20 warmed repetitions the update takes 504.392\,ms with FS2 and 791.073\,ms with KeOps, $1.568\times$ longer. Per pair, FS2 executes 55.7\% fewer warp instructions and 60.5\% fewer shared-load wavefronts, with tensor-pipe activity of 21\% against 0.07\%, despite more HBM reads, lower occupancy and more long-scoreboard stall. The extra HBM traffic follows from the policy's block size $B=128$ (\Cref{app:block-size}). The dense FlashSinkhorn reduction evaluates all $4.40\times10^{12}$ pairs of one direction in 3.6\,s, $14\times$ longer than FS2's reduction on its mask.

\subsection{Memory}
\label{app:memory}

\Cref{fig:benchmarks}(e) reports the PyTorch allocator peak, inputs included. The same protocol on the five heterogeneous FS2 solves at $\eps=0.001$ gives peaks of 0.80, 2.72, 9.89, 1.70 and 3.40\,GiB for $n=2^{19}$ to $2^{23}$. Up to $2^{21}$, FS2 takes one or two fine updates over 60--61\% of the tiles. From $2^{22}$ it returns the lifted coarse potentials without a fine update and builds no mask.

\textbf{Block size.} We hold the clustered $n=m=2^{23}$, $\eps=0.001$ problem and its coarse partition fixed ($t=42$) and change only the block size. Raising $B$ from 128 to 256 cuts the admitted tiles from $2.25\times10^{8}$ to $6.0\times10^{7}$ and the PyTorch peak from 16.17 to 6.88\,GiB, for 7\% more time, 48.52 against 45.38\,s, with the same returned $R=0.004694$. The automatic policy chooses $B$ from the L2 budget of the block summaries, not from this mask memory.

\FloatBarrier
\subsection{Optimization}
\label{app:optimization}

The optimization grid reuses the 32 forward inputs of \Cref{app:speed}. Each method runs one trajectory of twenty normalized updates (\Cref{sec:optimization}) and saves steps 0, 5, 10, 15 and 20. The timer includes compilation and autotuning, and a run stops at 7200\,s. FS2 steps along the surrogate gradient of \Cref{sec:coarse-direction}, while GeomLoss differentiates its native multiscale loss with autograd. GeomLoss reaches the time limit on six problems: clustered $2^{23}$ at $\eps=0.001$, and heterogeneous $2^{22}$ at $\eps\in\{0.001,0.002,0.005\}$ and $2^{23}$ at $\eps\in\{0.001,0.01\}$. \Cref{tab:optimization} summarizes the $n=m=2^{21}$ runs of \Cref{fig:benchmarks}(g,h).

\textbf{Score.} The evaluator fixes $K$ nested groups of the initial source cloud and of $\bY$, with FP64 centroids and within-group variances. At a saved iterate, $U_K$ is the entropic cost between the source and target group centroids, solved to row and column residuals of at most $10^{-4}$, plus the within-group squared-distance averages. We report $K=131072$ at $\eps\le0.002$, $65536$ at $\eps=0.005$ and $16384$ at $\eps\ge0.01$, and require every decrease to agree with the one at a coarser count ($65536$, $16384$ and $4096$) within $\max(10^{-6},0.02|\Delta U_K|)$. The score ranks trajectories and is not the full-point OT value.

\textbf{Direction at $\bX_0$.} At the initial points of the four $n=m=2^{21}$ runs, we compare $\widetilde G$ and GeomLoss's autograd direction with $G_i=2a_i(x_i-T_\eps(x_i))$ from a separate solve at $\tau=5\times10^{-4}$. On clustered data ($\eps=0.001$ and $0.01$), $\widetilde G$ has cosine 0.92 and 0.96 with $G$ and relative error 0.39 and 0.28, against 0.49 and 0.40 for GeomLoss's direction. On heterogeneous data its cosine is at least 0.9997 and, from 4096 rows evaluated in FP64, its relative error is 0.023 and 0.012, and the difference between the two directions lies within the uncertainty of the references. These values measure alignment with a numerical reference, not the error to the exact gradient, because references at $\tau=0.005$ and $5\times10^{-4}$ differ by 0.2--3\%, and on clustered data at $\eps=0.001$ the tighter solve stopped at residual $1.2\times10^{-3}$, above its target. On the same 4096 rows, on three of the four problems, the right-hand side of \Cref{eq:coarse-gradient-error} exceeds $\|G\|_F$, so \Cref{prop:coarse-barycentric} does not certify a relative error below one there.

\begin{table}[!htb]
\centering
\small
\setlength{\tabcolsep}{4pt}
\caption{Twenty-step optimization at $n=m=2^{21}$. A decrease is the percentage drop of $U$ from the common initial score $U_0$ over the twenty steps, and final scores within $3\times10^{-6}$ tie. Times cover the twenty updates, including compilation and autotuning.}
\label{tab:optimization}
\begin{tabular}{@{}llrrrrr@{}}
\toprule
Cloud & $\eps$ & $U_0$ & FS2 s & GL s & FS2 decrease & GL decrease \\
\midrule
Heterogeneous & .001 & .004297 & 52.80 & 3215.67 & .5829\% & .5826\% \\
Heterogeneous & .01  & .01501 & 51.66 & 1290.01 & .2551\% & .2550\% \\
Clustered     & .001 & .05230 & 79.81 & 453.22  & .3064\% & .2906\% \\
Clustered     & .01  & .08166 & 42.58 & 0.80    & .1756\% & .1682\% \\
\bottomrule
\end{tabular}

\end{table}

\FloatBarrier
\section{Quijote Particle Transport}
\label{app:quijote}

\subsection{Problem and Data}

The sources are redshift-zero positions matched by particle ID to their Lagrangian lattice sites, the targets. The $2^{24}$ instance is the central $256^3$ subcube and the $2^{27}$ instance the full $512^3$ box, with $\eps=3.814697265625$, so that the blur $\sqrt\eps=1.953125\,\mathrm{Mpc}/h$ equals the lattice spacing, and a nonperiodic cost. FS2 solves on TF32-rounded coordinates, which move each particle by at most 0.125\,Mpc/$h$, 6\% of the blur. In the $2^{24}$ subcube the particle-ID displacement $\|x_i-q_i\|$ has median 9.013, 99th percentile 20.668 and maximum 39.135\,Mpc/$h$.

\FloatBarrier
\subsection{Configuration and Timing}
\label{app:quijote-config}

\begin{table}[!htb]
\caption{Quijote forward solves, with the $\tau=0.05$ size sweep in the first four rows. $B/t/c/h$: block size, cell level, fine cap and returned ordinary updates. Peak: PyTorch allocator peak of the size sweep. Light validation: mean absolute relative marginal error on 4096 sampled indices per direction, 65,536 at $2^{25}$ and $2^{26}$, with the mean plus three standard errors in brackets. Full max: $R$, with dashes where it is not enumerated. W: warm call, F: first call, I: instrumented call. MS/.95/default/5: multiscale, scaling 0.95, default cluster scale, truncation 5.}
\label{tab:quijote}
\centering
\footnotesize
\setlength{\tabcolsep}{2.5pt}
\begin{tabular}{@{}llrlrrrr@{}}
\toprule
method & $n=m$ & $\tau$ & configuration & time (s) & peak (GiB) & light validation $[U_{3\mathrm{SE}}]$ & full max \\
\midrule
FS2 & $2^{24}$ & .05 & $B64/t47/c128/h80$ & 146.905$^{\mathrm W}$ & 12.053 & .046836 [.048531] & .046631\,\cmark \\
FS2 & $2^{25}$ & .05 & $B64/t45/c128/h48$ & 374.839$^{\mathrm W}$ & 19.472 & .049304 [.049758] & \textemdash \\
FS2 & $2^{26}$ & .05 & $B128/t45/c128/h72$ & 694.630$^{\mathrm W}$ & 33.131 & .048340 [.048783] & \textemdash \\
FS2 & $2^{27}$ & .05 & $B256/t45/c128/h80$ & 1351.589$^{\mathrm W}$ & 64.129 & .049009 [.050797] & .048965\,\cmark \\
\addlinespace
FS2 & $2^{24}$ & .01 & $B64/t47/c800/h432$ & 736.393$^{\mathrm W}$ & \textemdash & .009538 [.009924] & .009433\,\cmark \\
FS2 & $2^{27}$ & .01 & $B256/t45/c800/h560$ & 8832.994$^{\mathrm F}$ & \textemdash & .009585 [.009954] & .009154\,\cmark \\
\addlinespace
native GL & $2^{24}$ & .05 & MS/.95/default/5 & 3713.300$^{\mathrm I}$ & \textemdash & .058976 [.060909] & .058463\,\xmark \\
\bottomrule
\end{tabular}

\end{table}

\textbf{Configuration.} At $2^{24}$, FS2 uses $t=47$, the selector's $B=64$ and a sampled check every 16 fine updates. At $2^{27}$ it uses the selector's $B=256$, processed as $64\times64$ point tiles, $t=45$ and an automatic check interval. The fine cap is 128 at $\tau=0.05$ and 800 at $\tau=0.01$, where the two sizes return after 432 and 560 ordinary updates. At every size the solver completes the 64 entries at $\eps$ before its first lift. In place of the continuation of Appendix~\ref{app:stopping-rules}, it then runs at most one batch of 64 to 384 coarse updates after the second fine update and lifts again, and in every run the check scored that lift worse than the fine iterate. The time of this batch is included. We set $t$ by hand. At $2^{27}$ the rule would select $t=46$ if its fourth condition admitted that size, but with $t=46$ the sampled check became non-finite in the first fine updates, so we use $t=45$, whose coarse state exceeds the L2 budget of \Cref{eq:automatic-coarse-selection}. With the continuation of Appendix~\ref{app:stopping-rules} instead of the single batch, the $2^{27}$, $\tau=0.05$ solve took 2030.4\,s and 119 ordinary updates, against 1351.6\,s and 80 in \Cref{tab:quijote}. GeomLoss runs only at $2^{24}$ and $\tau=0.05$, at scaling 0.95, default cluster scale and truncation 5.

\textbf{Mask size.} At $2^{27}$ and $\tau=0.01$, in the update that produced the returned potentials, the box screen passed 107,397,067 of the $2^{38}$ block pairs and verification admitted 24,451,822, which hold $1.60\times10^{12}$ point pairs. A truncated kernel with one FP32 value and one 32-bit index per pair would take 11.7\,TiB. FS2's index for the candidate and admitted tiles, at 24 bytes per tile, takes 2.9\,GiB.

\textbf{Timed calls.} Every time in \Cref{tab:quijote} is a single call. FS2's are warm calls, except the first complete call at $2^{27}$ and $\tau=0.01$, and at $2^{24}$ a second call returns identical potentials, so the all-particle validation applies to the timed call. GeomLoss's is the instrumented call whose potentials are scored. Validation is not timed. The light validation of \Cref{tab:quijote} uses seed 1 and reduces each sampled marginal against every point of the opposite measure. At $2^{27}$ the IEEE-FP32 evaluator of the full residual takes some 38 hours on an A100 for the $\tau=0.01$ solution.

\subsection{Evaluator Precision}
\label{app:quijote-precision}

The full residuals come from an IEEE-FP32 evaluator that forms each row's log marginal ratio as the difference of two potentials shifted by $\|x_i\|^2$. Near the box corners $\|x_i\|^2$ reaches $7.5\times10^5$, whose FP32 spacing is 1.6\% of $\eps$, so a single row's mass is resolved only to 0.1--1.6\%. On 4096 or 16384 sampled rows and columns per entry, we recompute each row's error in FP64 from coordinate differences and subtract the mean paired difference from the full FP32 residual. This gives .04653/.04661, .04883/.04889, .00939/.00942 and .00882/.00869 (row/column) for the rows of \Cref{tab:quijote} with a full residual, in table order, with three standard errors of at most .0002, so every entry stays below its $\tau$.

\subsection{Size Sweep}
\label{app:quijote-size-sweep}

\Cref{fig:quijote-scaling}(a,b) and the first four rows of \Cref{tab:quijote} fix $\tau=0.05$ and use FS2 only, on nested cuboids centered in the box from $256^3$ to $512^3$ lattice sites with the lattice spacing and $\eps$ fixed. At $2^{24}$ the time and the memory come from two calls with identical potentials. The middle sizes are single warm calls with the $2^{27}$ configuration and the selector's $B$. Both pass the solver's sampled check and are validated on 65,536 sampled outputs per side, without enumerating the full residual.

\subsection{Displacement Reconstruction}
\label{app:quijote-reconstruction}

We take the $2^{27}$-particle, $\tau=0.01$ solution of \Cref{tab:quijote} and a uniform sample of 4096 source indices, drawn without replacement with seed 20260919. Each projection sums over all $2^{27}$ targets without a mask, using the plan of the TF32-rounded input and the unrounded physical target positions. With $L=1000\,\mathrm{Mpc}/h$ and the componentwise minimum image $w_L(v)=v-L\lfloor v/L+1/2\rfloor$, we evaluate
\[
 T_i=\frac{\sum_jP_{ij}q_j^{\mathrm{phys}}}{\sum_jP_{ij}},\qquad
 \hat\psi_i=w_L(x_i^{\mathrm{phys}}-T_i),\qquad
 \psi_i^*=w_L(x_i^{\mathrm{phys}}-q_i^{\mathrm{phys}}).
\]
The wrapping enters only the scoring, and the solver's cost stays nonperiodic. Over the sample $\mathcal S$, displacement RMSE is $[|\mathcal S|^{-1}\sum_{i\in\mathcal S}\|\hat\psi_i-\psi_i^*\|^2]^{1/2}$, and normalized RMSE divides it by the RMS reference displacement (\Cref{tab:quijote-reconstruction}). The component correlations are 0.797, 0.789 and 0.802, and the unwrapped position RMSE is 122.268\,Mpc/$h$.

\begin{table}[!htb]
\caption{Displacement reconstruction from the $2^{27}$-particle FS2 solution at $\tau=0.01$ on 4096 sampled particles, with minimum-image scoring. The interval comes from 500 bootstrap resamples over particles at 95\% confidence. The subsets split the sample by whether the reference trajectory crosses a periodic face.}
\label{tab:quijote-reconstruction}
\centering\small
\begin{tabular}{@{}lrrrr@{}}
\toprule
Predictor & Particles & RMSE (Mpc/$h$) & Normalized RMSE & 95\% interval \\
\midrule
Zero displacement & 4096 & 10.064 & 1.000 & -- \\
FS2 barycentric displacement & 4096 & 6.473 & 0.643 & [0.625, 0.662] \\
\quad no periodic-face crossing & 4034 & 6.333 & -- & -- \\
\quad periodic-face crossing & 62 & 12.603 & -- & -- \\
\bottomrule
\end{tabular}

\end{table}

The 4096-query projection takes 9.303\,s on one A100, excluding 43.474\,s of loading and preparation, and FP64 direct-distance projections of eight sampled queries agree with the FP32 stream within 0.002281\,Mpc/$h$.

\end{document}